\newtheorem{theorem}{Theorem}
\newtheorem{lemma}{Lemma}
\newtheorem{corollary}{Corollary}
\newtheorem{remark}{Remark}
\newcommand{\removelatexerror}{\let\@latex@error\@gobble}
\begin{document}

\title{How to Query An Oracle?\\ Efficient Strategies to Label Data}
\author{Farshad~Lahouti,
        Victoria~Kostina,
        and~Babak~Hassibi
\IEEEcompsocitemizethanks{\IEEEcompsocthanksitem The authors are with the Electrical Engineering Department, California Institute of Technology, Pasadena,
CA, 91125. 
E-mail: flahouti@ieee.org,vkostina@caltech.edu,hassibi@caltech.edu}
\thanks{Accepted for publication in IEEE Transactions on Pattern Analysis and Machine Intelligence  \copyright 2021 IEEE. Personal use of this material is permitted. Permission from IEEE must be obtained for all other uses, in any current or future media, including 
reprinting/republishing this material for advertising or promotional purposes, creating new 
collective works, for resale or redistribution to servers or lists, or reuse of any copyrighted 
component of this work in other works.}
}

%
%

\markboth{IEEE Transactions on Pattern Analysis and Machine Intelligence,~Vol.~XX, No.~X, 2021}%
{Lahouti \MakeLowercase{\textit{et al.}}: How to Query An Oracle? Efficient Strategies to Label Data}
%



\IEEEtitleabstractindextext{%
\begin{abstract}
We consider the basic problem of querying an expert oracle for labeling a dataset in machine learning. This is typically an expensive and time consuming process and therefore, we seek ways to do so efficiently. The conventional approach involves comparing each sample with (the representative of) each class to find a match. In a setting with $N$ equally likely classes, this involves $N/2$ pairwise comparisons (queries per sample) on average. We consider a $k$-ary query scheme with $k\ge 2$ samples in a query that identifies (dis)similar items in the set while effectively exploiting the associated transitive relations. We present a randomized batch algorithm that operates on a round-by-round basis to label the samples and achieves a query rate of $O(\frac{N}{k^2})$. In addition, we present an adaptive greedy query scheme, which achieves an average rate of $\approx 0.2N$ queries per sample with triplet queries. For the proposed algorithms, we investigate the query rate performance analytically and with simulations. Empirical studies suggest that each triplet query takes an expert at most 50\% more time compared with a pairwise query, indicating the effectiveness of the proposed $k$-ary query schemes. We generalize the analyses to nonuniform class distributions when possible.
\end{abstract}

\begin{IEEEkeywords}
Machine learning, labeling datasets, clustering, classification, entity resolution
\end{IEEEkeywords}}

\maketitle

\IEEEdisplaynontitleabstractindextext

%
\IEEEpeerreviewmaketitle

\ifCLASSOPTIONcompsoc
\IEEEraisesectionheading{\section{Introduction}\label{sec:introduction}}
\else
\section{Introduction}
\label{sec:introduction}
\fi
\IEEEPARstart{C}{lustering} and classification are the basic elements of many machine learning systems and applications. A labeled and well curated dataset is essential to design a high performing classifier. Clustering algorithms also benefit from labeled data. Labeling of data is done by experts or non-expert crowds. In the latter case, it may contain some errors, and in either case, collecting the labels and creating reliable datasets is an expensive and time consuming process. In this work, we investigate ways to label a data set efficiently with as few expert queries as possible.

As a motivating example, we consider the important and widely researched entity resolution (ER) problem. Entity resolution is the problem of identifying in a dataset the multiple instances that are related to a given entity or an object. For example, we may seek all pictures that are related to a given point of interest in a database of images; or we may wish to find all articles authored by a researcher. The problem is visibly complicated, if we take into account the variety of angels and light conditions of photography in the first case, or in the second case, people with the same names, and ways people write their names and different affiliations. A survey of ER and related research is reported in \cite{4016511}.

Researchers have proposed a variety of interesting heuristic solutions and a few with theoretical guarantees to address the ER problem. The anatomy of an ER solution may be described as follows. (1) A similarity measure is assumed, based on which a probability is computed for any pair of samples to be related, i.e., to be from the same cluster. (2) The samples are ordered for pairwise queries based on the said probability and possibly using a metric quantifying the expected gain \cite{Whang13}. Some pairs may also be immediately decided as either a match or a mismatch if the probability is very high or very low. In \cite{Verroios15} a metric for ordering based on a maximum likelihood criterion is presented. (3) The queries are posed to experts or to a crowd of non-experts and labels are collected, possibly within a constrained budget. (4) A post-processing of the results then produces the final clusters. This last stage could for example be based on clustering, or majority voting when crowdsourcing is used. Positive or negative transitive relations can be used in stage 2 to reduce the number of queries or in stage 4 to handle possible label errors \cite{Wang13}\cite{Vesdapunt14}. For example, if previous queries showed that samples $s_1$ and $s_2$ match and that $s_2$ and $s_3$ match, then via positive transitivity there is no need to query $s_2$ and $s_3$ as they obviously match. If previous queries further revealed that $s_1$ and $s_4$ do not match, then via negative transitivity it immediately follows that $s_4$ does not match $s_2$ and $s_3$ either. For more efficient labeling, \cite{Wang12} and \cite{Marcus11} have explored the notion of batching pairwise queries in so-called human intelligence tasks.
		
ER schemes may be evaluated from multiple perspectives \cite{Chen18}\cite{Kopke10}: (1) Effectiveness or performance of clustering (for example in terms of recall and precision); (2) Efficiency, or the number of queries required per sample to achieve this performance; (3) Operation and scalability, i.e., whether the scheme is adaptive or non-adaptive, whether it runs online or in batch, and whether it is parallelizable; and, (4) Genericity, or how and whether the scheme may be applied to different scenarios. For example, the Jaccard similarity function is popular when only dealing with textual data. More generally, how well a given similarity function represents the true probability that two samples match is quite sensitive to the context. As a consequence, most prior work on labeling is domain-specific.

A query scheme dubbed \emph{$k$-ary incidence coding}, or $k$IC, is presented in \cite{Lahouti16} and its effectiveness is analyzed in boosting the reliability of crowdsourcing. A $k$IC query is essentially a comparison of $k$ samples with the transitive relations embedded in the query design. They also investigated the information theoretic limits of label crowdsourcing in certain settings. \cite{Aria17} studied clustering problem in presence of a similarity function (side information) from an information-theoretic perspective and proposed algorithms and lower-bounds on its performance.

In this work, we focus on efficient labeling of datasets in clustering and classification applications. We assume we have access to an oracle (an expert), who provides correct answers to each query at a given cost. We seek efficient query schemes that are domain-agnostic; we thus assume that no similarity measure is available. We consider $k$IC queries and examine the advantage of querying $k\ge 2$ objects at once for labeling a dataset into $N\ge 2$ classes. Empirical studies suggest that each triplet query takes an expert at most 50\% more time compared with a pairwise query. This is while it provides three times more comparisons, indicating the potential of larger query sizes \cite{Ramya88}. We emphasize that the problems of interest typically have a large number of classes, $N$, and taking $k$'s that are small or moderate is more practical. Three algorithms are presented: (1) A basic algorithm that runs on a sample-by-sample basis; (2) A batch processing randomized algorithm that operates in multiple rounds and achieves a query rate of $O(\frac{N}{k^2})$ with $k$IC; and (3) An adaptive greedy algorithm that achieves the best performance at $\approx 0.2N$ queries per sample for $3$IC. Each of the algorithms are assessed both numerically and analytically. To the best of our knowledge the batch processing algorithm is the first query scheme reported in the literature with proven query rate of better than $O(N/k)$.

This article is organized as follows. We begin with the problem setup and background in Section \ref{PrelSec}. The three proposed algorithms are described and analyzed in Sections \ref{Algo1Sec}, \ref{Algo2Sec} and \ref{Algo3Sec}. Some numerical results, performance analyses and experiment results are reported in Section \ref{PerfSec}. The article is concluded in Section \ref{ConcSec}. We present the proofs to theorems and corollaries with some additional details and examples in
the appendices.

\section{Preliminaries}\label{PrelSec}

\subsection{Overview}
\begin{figure*}[t!]
\centering
  \vspace*{\fill}
  \centering
  \includegraphics[width=0.95\textwidth]{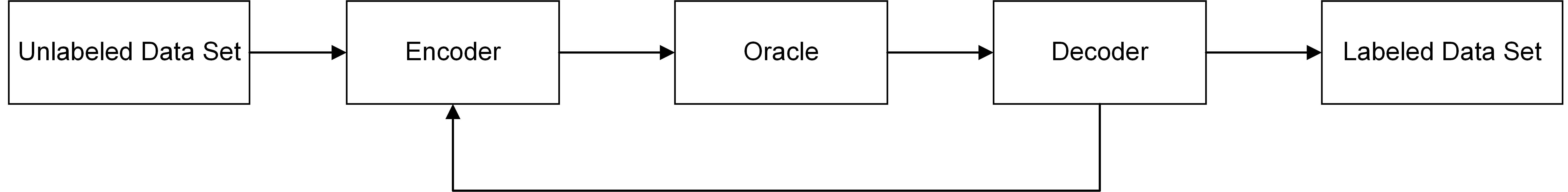}
\caption{System Model}
\label{SystemFig}
\end{figure*}

\begin{figure*}[t!]
\centering
  \vspace*{\fill}
  \centering
  \includegraphics[width=0.8\textwidth]{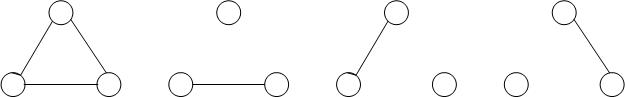}
  \par\vfill
  \vspace{0.25cm}
  \includegraphics[width=0.8\textwidth]{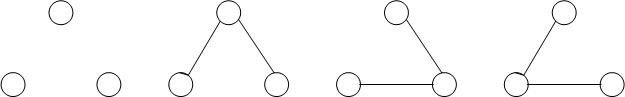}
\caption{A $3$IC query. Circles represent the samples to be labeled. An edge shows that two samples are from the same class. For $N=2$, the first row shows the possible valid responses and the second row the invalid responses. For $N>2$, the first element of the second row is also a valid response.}
\label{3ICFig}
\end{figure*}

We consider labeling of a dataset of $L\gg 1$ samples, where the number of classes $N$ may be unknown. Figure \ref{SystemFig} shows the system model. We assume a general $k$IC query scheme, composed of an encoder and a decoder. The $k$IC encoder assembles $k \ge 2$ samples from the dataset in a single query to learn about their (dis)similarities from the oracle. The decoder then collects the responses and attempts to classify the samples. The oracle is assumed to be able to identify similar and dissimilar samples in a $k$IC query without error. The case with errors may be handled in conjunction with crowdsourcing schemes as discussed for future research in Section \ref{ConcSec}. Since we here focus on application agnostic schemes for labeling a dataset, we do not assume a similarity function at the encoder. However, we consider a feedback from the decoder to the encoder. The feedback can be used for the encoder to learn the number of classes and estimate their sizes as it processes the dataset. The feedback also allows the encoder to strategize for an optimized assembly of samples in a subsequent query based on the current set of results received from the oracle. 

In the sequel, we elaborate on the dataset models, the $k$IC query scheme and measures of assessing its efficiency. Although many of the results we present are for general values of $k$, we specialize them to particular values, for example $2$ or $3$, when applicable. In general, one would not use a large $k$ when dealing with human experts for query. Figure~\ref{3ICFig} shows a $3$IC scheme. Due to the transitive relations, out of all eight possibilities for the query response, only four or five (depending on the value of $N$) may be valid. 

A conventional query, when we directly ask about the label of a given sample, is equivalent to a $(N+1)$IC query, composed of the sample and the $N$ representatives of each of the classes. It is directly evident that for large $N$, such a query could be a challenging one that is potentially prone to error; hence, motivating $k$IC with limited $k$. We offer a few example use cases here: (1) Large number of classes: In labeling pictures that are related to different points of interest in a database of relevant images, there are plenty of such places many of them may be less popular and less known. As such directly identifying the corresponding class for an image is challenging, however, comparisons are in general easier. This is typical of many ER problems, which involve detecting duplicates in databases with large number of classes (entries) \cite{4016511}. (2) Subjective issues: Consider the problem of labeling images for whether or how much they may be suitable for children. In this case, comparison type queries are in general easier and more reliable. (3) Nonuniform datasets: When dealing with highly nonuniform datasets, which contain classes of noticeably different sizes, the conventional approach does not leave any room for innovation, as there is always a single query per sample required. As we shall demonstrate in the sequel, using $k$IC, we can design more sophisticated labeling schemes. As an example, if we use a $3$IC, and opt to query two samples with the biggest class at first, it is highly likely that the two samples are labeled right there. This shows a query rate improvement of up to $50\%$ compared to the conventional scheme. (4) Expert knowledge: Consider the problem of labeling a dataset of dog images with their breeds, especially when the number of breeds are large. With $k$IC, the problem breaks down to comparison of a few dog images at once. For a non-expert comparisons could be easier as opposed to directly identifying the bread. In all these examples, $k$IC queries provide room for more innovation.

\subsection{Data Set: Source}\label{SSource}
Consider a dataset $\mathcal{X}=\{X_1,\hdots, X_L\}$ composed of $L$ items, e.g., dog images. 
In practice, there is certain function $B(X)\in \mathcal{B}(\mathcal{X})$ of the items (e.g., the breed of the dogs) that is of interest and is here considered as the source. The value of this function is to be determined by the oracle for the given dataset. In the case of clustering, $B(X_i)= B_j\in\mathcal{B}(\mathcal{X})=\{B_1, \hdots, B_N\}$ indicates the bin or the class to which the item $X_i$ ideally belongs.
We have $B(X_1, \hdots, X_n)=B(X^n)=(B(X_1),\hdots, B(X_n))$. Without loss of generality, we consider $\mathcal{B}(\mathcal{X})=\{1, \hdots, N\}$ in the sequel. The number of clusters, $|\mathcal{B}(\mathcal{X})|=N$, may or may not be known a priori. In developing the proposed query schemes, we deal with an arbitrary sample of this dataset, denoted by $s \in \mathcal{X}$. When two samples match, it means that they belong to the same class.

The classes may be of different sizes and as such we consider a probability distribution for the classes as 
	\begin{equation}
	    \boldsymbol{\pi}=\{\pi_1, \hdots, \pi_N\}.
	    \label{GeneralPiEq}
	\end{equation}
In case, the class distributions are uniform, we have
	\begin{equation}
	    \boldsymbol{\pi}^u=\{1/N, \hdots, 1/N\}.
	\end{equation}
A probability distribution of interest for classes in machine learning datasets is the Zipf distribution. It is a discrete power law distribution that approximates many types of data in natural languages, physical sciences and social sciences and is described as follows
	\begin{equation}
	    \pi^z(i; \nu, N)=\frac{1}{\eta\, i^\nu}, \hspace{1cm} 1\le i \le N,
	\end{equation}
where $\nu$ is an exponent parameter describing the shape of the distribution and $\eta=\sum_{j=1}^{N}\frac{1}{j^\nu}$ is a normalization factor.
In parts of our analyses, we specialize the results to a particular class of nonuniform class distributions. In this setting there is one class of distinct size, but the rest have the same sizes.
	\begin{equation}
	    \boldsymbol{\pi}^n(x)=\bigg\{\frac{\alpha'}{N}, \hdots, \frac{\alpha'}{N},\frac{\alpha}{N}\bigg\}
	\label{NonuniformClassEq}
	\end{equation}
	where $\alpha'=\frac{N-\alpha}{N-1}$ and $x := \frac{\alpha}{(N-1)\alpha'}$ is a parameter that determines the size (probability) of the distinct cluster with respect to others. It is straightforward to see that $\alpha=\frac{xN}{1+x}$ and $\alpha'=\frac{N}{(N-1)(1+x)}$. For $0<\epsilon \ll 1$, having $x=\epsilon$ and $\epsilon N \ll 1$ would indicate one small cluster and $x=1/\epsilon$ indicates one large cluster. In the former case, $x=\epsilon$, we have $\pi_i=\frac{1}{(N-1)(1+\epsilon)}, 1\le i \le N-1$ and $\pi_N=\frac{\epsilon}{1+\epsilon}$, i.e., 
		\begin{equation}
	    \boldsymbol{\pi}^n(x=\epsilon) = \bigg\{\frac{1}{(N-1)(1+\epsilon)}, \hdots, \frac{1}{(N-1)(1+\epsilon)},\frac{\epsilon}{1+\epsilon}\bigg\}.
	\label{NonuniformClassEq1}
	\end{equation}
In the latter case, $x=1/\epsilon$, we have $\pi_i=\frac{\epsilon}{(N-1)(1+\epsilon)}\approx \frac{\epsilon}{N-1}, 1\le i \le N-1$ and $\pi_N=\frac{1}{1+\epsilon} \approx 1-\epsilon$, i.e., 
		\begin{equation}
	    \boldsymbol{\pi}^n(x=1/\epsilon) = \bigg\{\frac{\epsilon}{N-1}, \hdots, \frac{\epsilon}{N-1},1-\epsilon\bigg\}.
	\label{NonuniformClassEq2}
	\end{equation}
This is an abstraction of the Zipf distribution, and Pareto-type distributions, which lends itself well to analysis and provides interesting insights as we shall elaborate in the sequel.

\subsection{Query Efficiency}
When two samples $s_1$ and $s_2$ meet in a query (with possibly other samples participating) we denote this event by $(s_1,s_2)\in {\cal Q}$. A sample is \emph{settled} in a query when there is a match (or merger); this is denoted by $s\in {\cal L}$, where ${\cal L}$ is the labeled set. If this occurs for a sample following $q$ queries we denote it by $s\in {\cal L}_q$, where $\mathcal{L}_q$ indicates the set of samples that are settled in $q$th query. $E_{q}$ describes the event that a sample is settled in $q$th query, i.e.,  
\begin{equation}
E_q:=
\begin{cases}
1 & s \in \mathcal{L}_q, s \notin \mathcal{L}_{q-1}, s \notin \mathcal{L}_{q-2}, \hdots\\
0 & \text{otherwise}
\end{cases}
\nonumber
\end{equation}
This indicates that the sample has already participated in $q-1$ queries without success, but is successful in the subsequent query.

The efficiency of a query scheme may be quantified in terms of the number of queries made for labeling of an arbitrary sample. This is in general a random variable and hence may be characterized by a probability distribution, herein referred to as the \emph{query distribution}, $P(Q=q):=\mathbb{E}_{(s,\theta)}[\Pr\{E_q=1\}$]. Here, $\theta$ is a latent variable representing possible randomness in the query algorithm. The algorithms we consider in this work are domain agnostic and do not use any similarity function, and hence treat different samples, $s$, the same, as objects. Still, the class distribution does reflect in the query distribution. Depending on the application and the query scenario, we can use different statistics of the query distribution as measures of efficiency. In general, a useful metric quantifying the efficiency of a query scheme is the average number of queries per sample, or the \emph{query rate}: 
\begin{equation}
R=\frac{1}{\kappa}\mathbb{E}[Q] \hspace{1cm} \text{queries/sample},
\end{equation} 
where $\kappa$ indicates the number of samples that participate in a query for labeling in a given query scheme (with $k$IC, $1\le \kappa\le k$). Note that in $k$IC with $\kappa\le k$, $k - \kappa$ samples in the query is known a priori (see for example Sections \ref{Algo1Sec} and \ref{Algo3Sec}). For a given query scheme, we can attempt to statistically analyze these efficiency metrics. Alternatively, we can empirically compute these metrics based on a sufficiently large dataset. If it takes $W$ queries with a specific algorithm to label a large dataset of size $L$, then $\frac{W}{\kappa L}$, also approximates the query rate. 

In labeling a dataset of size $L$, if the price we pay for each query is fixed at $\eta'$, then the overall budget we require for labeling is $\eta=R\times \eta'$ in which $R$ is the average query rate of the labeling scheme in use. While the focus of this research is on design of query efficient labeling schemes (minimizing $R$), this indicates the role of query pricing. A reduced query rate allows room for higher query prices with a given budget, hence providing a wealth of design trade-offs to optimize labeling budget. This issue is also reflected in pricing of crowdsourcing schemes, as is for example discussed in the Appendix of [10] for $k$IC queries. Pricing of a query in general can be designed from cost or value perspectives, taking into account practical (or market) constraints. The cost is influenced by factors such as the time or effort it takes to respond to a query, and the engagement model and the level of expertise of the people involved. The value on the other hand depends on how helpful or critical the label is in the specific application. Different commercial solutions in this space have already adopted a wide variety of business models and pricing schemes.

\subsection{Query Scheme: $k$-ary Incidence Coding ($k$IC)}\label{SCoding}

In a $k$IC query, the purpose is to identify those samples among a set of $k$ samples that are similar. The queries in fact are posed as inquiring the elements of a binary incidence matrix, ${\bf A}$, whose rows and columns correspond to $X$ \cite{Lahouti16}. In this case, ${\bf A}(X_1,X_2 )=1$ indicates that the two are members of the same cluster and ${\bf A}(X_1,X_2 )=0$ indicates otherwise. The matrix is symmetric and its diagonal is $1$. We refer to this query scheme as Binary Incidence Coding. If we present three samples at once and ask the user to classify them (put them in similar or distinct bins);  it is as if we ask about three elements of the same matrix, i.e., ${\bf A}(X_1,X_2), {\bf A}(X_1,X_3)$ and ${\bf A}(X_2,X_3)$ (Ternary Incidence Coding). In general, if we present $k$ samples as a single query, it is equivalent to inquiring about ${k \choose 2}$ (choose $2$ out of $k$ elements) entries of the matrix ($k$-ary Incidence Coding or $k$IC). As we elaborate below, out of the $2^{{k \choose 2}}$ possibilities, a number of the choices remain invalid and the redundancy that is thus captured makes more efficient querying. To assess this, we quantify the number of possible responses to a $k$IC query, when we deal with clustering of a dataset into $N$ classes. In this setting, a $k$IC query may have samples from one up to $\min(k,N)$ classes. Assuming the query has exactly $i$ classes, the following lemma quantifies the number of possible valid responses $g(k,i)$.   
\begin{lemma}\label{TheLemma}
With $k$ vertices and $i$ disjoint subgraph cliques, the number of possible graph realizations is given by

\begin{equation}
g(k,i)=\sum_{j=\lceil{\frac{k}{i}}\rceil}^{k-i+1} h(k,i,j)
\end{equation}
where $i\le k$, $h(k,i,j)$ is given in \eqref{hEQ},
\begin{figure*}[h]
\begin{equation}\label{hEQ}
h(k,i,j)=
\begin{cases}
{k \choose j} \times \sum_{l=\lceil{\frac{k-j}{i-1}}\rceil}^{\min(j,k-j-i+2)} \frac{1}{1+\rho^*(k,i-1,j,l)}\times h(k-j,i-1,l)& \lceil{\frac{k}{i}}\rceil \leq j \leq k-i+1\\
0 & \text{otherwise}
\end{cases}
\end{equation}
\end{figure*}
$h(l,l,1)=h(l,1,l)=1, g(l,1)=g(l,l)=1, \forall l\ge 1$ and $h(k,i,1)=0, \forall k \neq i$, and
\begin{equation}
\rho^*(k,i,j,l)=
\begin{cases}
     \lfloor{\min\big(\frac{k-j-i}{j-1},\frac{k-j}{j}\big)}\rfloor & j=l\\
     0 & otherwise.
\end{cases}
\end{equation}
\end{lemma}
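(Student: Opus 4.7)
My plan is to prove the recurrence by induction, using a combinatorial interpretation of $g$ and $h$. Specifying $i$ disjoint cliques on $k$ labeled vertices is the same as partitioning $\{1,\dots,k\}$ into $i$ non-empty classes, so $g(k,i)$ is just the Stirling number of the second kind $S(k,i)$. I would then introduce the auxiliary quantity $h(k,i,j)$ as the number of such partitions in which a largest class has cardinality exactly $j$, so that $g(k,i)=\sum_j h(k,i,j)$. The outer range $\lceil k/i\rceil\le j\le k-i+1$ comes from two standard constraints: the largest class has size at least the average $k/i$, and at most $k-(i-1)$ after leaving at least one vertex for each of the remaining $i-1$ classes.

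The recurrence for $h(k,i,j)$ is obtained by distinguishing one size-$j$ class: pick its $j$ vertices in $\binom{k}{j}$ ways, then partition the $k-j$ remaining vertices into $i-1$ non-empty classes whose own largest has size $l\le j$. Summing over $l$ with bounds $\lceil(k-j)/(i-1)\rceil\le l\le\min(j,k-j-i+2)$ (again from the average and leave-one-per-class arguments, intersected with $l\le j$) gives the inner sum. This enumeration overcounts each partition by its total number $m$ of size-$j$ classes, since any of them could have played the role of the distinguished one. When $l<j$ the remainder has no further size-$j$ class, so $m=1$ and no correction is needed, matching $\rho^*=0$. When $l=j$, additional size-$j$ classes may appear in the remainder; the stated expression $\rho^*(k,i-1,j,j)=\lfloor\min((k-j-i+1)/(j-1),(k-j)/j)\rfloor$ captures the maximum number of additional size-$j$ classes that can fit, the first bound enforcing non-emptiness of the other classes and the second the raw vertex budget.

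The boundary cases $h(l,l,1)=1$ (all singletons), $h(l,1,l)=1$ (single block), and $h(k,i,1)=0$ for $k\ne i$ (a partition with largest class of size $1$ must be all singletons, forcing $k=i$) follow directly from the interpretation, and the induction is well-founded since each recursive call strictly decreases both $k$ and $i$. The main obstacle I anticipate is justifying that the single factor $1/(1+\rho^*)$, which depends only on the enumeration parameters rather than on the fine structure of the sub-partition, correctly compensates for the overcounting once partitions with different multiplicities of size-$j$ classes are aggregated under one $l$. The cleanest route is to unroll the recursion along the maximal chain of calls for which $l=j$ persists and verify that the product of the successive $(1+\rho^*)$ factors telescopes to $m!$, the number of orderings of the $m$ size-$j$ classes in the original partition, so that each unordered set partition is counted exactly once overall in $g(k,i)$.
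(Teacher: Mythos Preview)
Your combinatorial reading of $g(k,i)$ as the Stirling number $S(k,i)$ and of $h(k,i,j)$ as set partitions with largest block of size exactly $j$, together with the derivation of the summation limits from the average-size and leave-one-per-class constraints, is precisely what the paper's own proof does; the two arguments coincide up to and including the identification of the obstacle.

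The obstacle you flag is genuine, and the telescoping route you sketch does not close it. The overcounting incurred when one distinguishes a size-$j$ block is a factor of $m$ (the total number of size-$j$ blocks in the partition), not $m!$: the recursive call returns the already-correct count $h(k-j,i-1,l)$, so no residual factors remain to telescope against the outer $1/(1+\rho^*)$. Hence the correction is valid only when \emph{every} partition of the remaining $k-j$ vertices into $i-1$ blocks with maximum block size $l=j$ has exactly $\rho^*$ blocks of size $j$, i.e.\ when the maximum possible multiplicity equals the actual multiplicity throughout that bucket. This happens to hold in the small examples the paper works out, but it fails at $(k,i,j)=(10,4,3)$: with $k-j=7$, $i-1=3$, $l=j=3$, both profiles $(3,3,1)$ and $(3,2,2)$ occur, carrying two and one size-$3$ block respectively, while $\rho^*(10,3,3,3)=2$ imposes the single divisor $3$ on both. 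The recurrence then gives $h(10,4,3)=\binom{10}{3}\cdot\tfrac{1}{3}\cdot h(7,3,3)=120\cdot\tfrac{1}{3}\cdot175=7000$, whereas the direct count over the profiles $(3,3,3,1)$ and $(3,3,2,2)$ yields $2800+6300=9100$. Summing over $j$ one obtains $g(10,4)=32005$, matching the paper's own table, but $S(10,4)=34105$. The paper's proof merely asserts that $1/(1+\rho^*)$ ``is meant to count for'' the repeated maxima and does not address this non-uniformity either, so neither your plan nor the paper's argument establishes the formula as a count of set partitions; the statement itself, not just the proof, is what breaks down here.
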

\begin{proof}
The proof and more details are provided in the appendix.
\end{proof}
In \eqref{hEQ}, $h(k,i,j)$ represents the number of possibilities in $k$IC queries with exactly $i$ cliques (classes) and $j$ items in its largest class. In some occasions, we have more than one bin with the largest number of items, as elaborated in the proof, the factor $\frac{1}{1+\rho^*}$ in \eqref{hEQ} is meant to count for that.
The number of valid responses in a $k$IC query where the samples are from $N$ possible classes, is then given by 
\begin{equation}\label{fFunDef}
f(k,N)=\sum_{i=1}^{\min{(N,k)}} g(k,i).    
\end{equation}
With $k$ samples in a query, the redundancy that is captured by $k$IC due to the transitive relations reduces the number of possible choices from $2^{k \choose 2}$ to $f(k,N)$. We may quantify this as ${k \choose 2}-\log_2 f(k,N)$ in bits. For $N=5, k=2,\hdots,10$, the redundancy is $\{0\quad   0.68\quad    2.09\quad    4.30\quad    7.34\quad   11.26\quad   16.09\quad   21.86\quad   28.63\}$ bits. This is an indication and quantification of how exploiting transitive relations in $k$IC makes it an efficient query scheme.

\section{Basic Query Scheme with $k$IC}\label{Algo1Sec}
Consider the clustering of a large dataset into $N$ classes. For every sample, we can query the oracle for comparison of the sample with one or more class representatives ($k$IC, $k\ge 2$) to identify a match. We refer to this scheme as the \emph{basic query scheme}. In case the classes (and their number) are not known in advance, we can create a new class whenever we exhaust all the existing classes and do not find a match. This scheme is elaborated in Algorithm~\ref{Algo1}.  
Since Algorithm~\ref{Algo1} queries each sample with the classes in sequence, its efficiency is sensitive to the ordering of the classes if they have different probabilities. For an optimal query performance with an oracle that responds without error to pairwise comparisons ($2$IC), it is optimal to arrange the classes in reducing order of their likeliness \cite{massey94}. 

In labeling of a large dataset, Algorithm~\ref{Algo1} first runs through a transient phase until all classes and the order of their sizes (probabilities) are identified. At this point, it reaches a steady state. Theorem~\ref{Algo1Theorem} presents the query rate of Algorithm~\ref{Algo1} or the average number of $k$IC queries until a sample is labeled, in steady state. For the uniform class distribution and pairwise comparisons ($2$IC), similar results are presented in \cite{Davidson14}.

\begin{figure}[!t]
 \removelatexerror
  \begin{algorithm}[H]
\caption{Basic Query Scheme with $k$IC and Unknown Number and Size of Classes}
\label{Algo1}
   \begin{enumerate}
    \item 
    Initialize:
    \begin{enumerate}
    \item
Take $k$ samples at once and compare.
    \item
Consider the distinct subsets as classes and consider one sample in each subset as the representative.
    \end{enumerate}
    \item \label{Algo1Step2}
Arrange the class representatives in reducing order of their class sizes
\item
Take a new sample from the dataset:
    \begin{enumerate}
        \item 
    If $\geq k - 1$ classes have already been created, compare the sample with $k-1$ existing classes at a time ($k$IC) in sequence. Merge if there is a match.
    \item
    If only $0<k'<k-1$ classes are available, compare with only that many classes ($(k'+1)$IC).
    \item
If all existing classes are exhausted and there is no match, create a new class.
    \end{enumerate}
    \item
    Continue from step~\ref{Algo1Step2} until all samples are labeled.
    \end{enumerate}
\end{algorithm}
\end{figure}

\begin{theorem}\label{Algo1Theorem}
The query rate of Algorithm 1 with $k$IC and class distribution $\boldsymbol{\pi}$ in the steady state is given by
\begin{equation}\label{Algo1TheoremEq}
R_1^{kIC}=\sum_{i=1}^{\frac{N}{k-1}}i \pi'_{i} \hspace{1cm} \text{queries/sample}
\nonumber
\end{equation}
in which $\pi'_i=\sum_{j=(i-1)(k-1)+1}^{i(k-1)}\pi_j, 0<i\leq N/(k-1)$ and we assume $k-1$ divides $N$. Furthermore, as prescribed in step 2 of this algorithm, it is optimal to arrange the class representatives in reducing order of their class sizes.
\end{theorem}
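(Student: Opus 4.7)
The plan is to split the argument into two parts: computing the expected number of queries needed to label a single sample under the ordering prescribed in step~2, and then showing this ordering is optimal among all arrangements of the class representatives.

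For the first part, I would begin by invoking the steady-state assumption. Once Algorithm~\ref{Algo1} has processed enough samples from a large dataset ($L \gg N$), all classes have been discovered and their empirical sizes concentrate around the true probabilities $\pi_j$, so the dynamically maintained ordering of step~2 stabilizes to the fixed ranking $\pi_1 \ge \pi_2 \ge \dots \ge \pi_N$. Under this steady state, a fresh sample is compared against blocks of $k-1$ representatives at a time: the $i$-th $k$IC query pairs the new sample with classes indexed $(i-1)(k-1)+1, \dots, i(k-1)$. Since the oracle correctly resolves (dis)similarities inside any $k$IC query, the sample is settled precisely on the $i$-th query iff its true class lies in that block, an event of probability $\pi'_i$. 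Because exactly one sample is being labeled per query sequence (so $\kappa = 1$ in the definition of $R$), the query rate equals the expected number of queries:
\begin{equation}
R_1^{kIC} = E[Q] = \sum_{i=1}^{N/(k-1)} i \, \pi'_i. \nonumber
\end{equation}

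For the second part, I would rewrite this as $E[Q] = \sum_{j=1}^N \lceil j/(k-1) \rceil \, \pi_{\sigma(j)}$, where $\sigma$ is any permutation placing class $\sigma(j)$ in the $j$-th querying position. The coefficients $c_j := \lceil j/(k-1) \rceil$ are nondecreasing in $j$, so by the rearrangement inequality (or a direct adjacent-transposition exchange argument) the sum is minimized when $\pi_{\sigma(j)}$ is nonincreasing in $j$, i.e., when the class representatives are ordered by decreasing size. This is the block analogue of the classical optimal-search result of Massey~\cite{massey94}.

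The main obstacle is the steady-state justification rather than the counting itself. During the transient phase the algorithm has seen only a subset of the classes, so its ordering reflects a partial empirical estimate and the per-query behavior may deviate from the steady-state formula. A coupon-collector-type bound shows that the number of queries needed to discover all $N$ classes and stabilize the ranking is $O(N \log N)$, which amortizes to $O((N \log N)/L) \to 0$ per sample as the dataset grows and thus does not affect the asymptotic rate. I would dispose of the transient contribution at the start of the proof and then carry out the settling-probability computation and the rearrangement argument cleanly under the steady-state regime.
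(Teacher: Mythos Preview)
Your proposal is correct and follows essentially the same route as the paper: both compute $P(Q=i)=\pi'_i$ directly from the block structure of the $k$IC queries and take $R_1^{kIC}=\sum_i i\,\pi'_i$. For the optimality of the decreasing ordering, the paper carries out an explicit two-position swap argument showing $R(\boldsymbol{\pi}^1)-R(\boldsymbol{\pi}^2)=(i-i')(\pi_j-\pi_l)<0$, while you invoke the rearrangement inequality on the nondecreasing weights $c_j=\lceil j/(k-1)\rceil$; these are the same argument at different levels of packaging. Your added coupon-collector remark about the transient phase is extra relative to the paper, which simply works under the stated steady-state assumption.
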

\begin{proof}
The proof of this theorem and the resulting two corollaries are provided in the appendix.
\end{proof}
The next corollary specializes the results to the case where the class distribution is uniform.
\begin{corollary}
The query rate of Algorithm 1 with $k$IC and uniform class distribution in the steady state is given by
\begin{equation}
R_1^{kIC}(\boldsymbol{\pi}^u)=\frac{N+k-1}{2(k-1)} \simeq \frac{N}{2(k-1)}\hspace{1cm} \text{queries/sample},
\nonumber
\end{equation}
where the approximation holds for $N\gg k>1$. 
\label{Algo1Cor1}
\end{corollary}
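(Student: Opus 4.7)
The plan is to specialize the general formula from Theorem~\ref{Algo1Theorem} by substituting $\pi_j = 1/N$ into the expressions for the aggregated probabilities $\pi'_i$. First I would observe that under the uniform distribution, each block of $k-1$ consecutive classes contributes the same total mass, so
\[
\pi'_i \;=\; \sum_{j=(i-1)(k-1)+1}^{i(k-1)} \frac{1}{N} \;=\; \frac{k-1}{N},
\]
independent of $i$. This reduces the weighted sum in \eqref{Algo1TheoremEq} to a plain arithmetic progression, which is the key simplification.

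Next, I would substitute this into $R_1^{kIC}=\sum_{i=1}^{N/(k-1)} i\,\pi'_i$ and pull out the constant, giving
\[
R_1^{kIC}(\boldsymbol{\pi}^u) \;=\; \frac{k-1}{N}\sum_{i=1}^{N/(k-1)} i.
\]
Using the standard identity $\sum_{i=1}^{M} i = M(M+1)/2$ with $M = N/(k-1)$ (which is an integer by the divisibility assumption of Theorem~\ref{Algo1Theorem}), the sum evaluates to $\tfrac{1}{2}\tfrac{N}{k-1}\!\left(\tfrac{N}{k-1}+1\right)$. Multiplying by the prefactor $(k-1)/N$ then yields $\tfrac{1}{2}\!\left(\tfrac{N}{k-1}+1\right) = \tfrac{N+k-1}{2(k-1)}$, which is the claimed closed form. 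The asymptotic $\simeq N/(2(k-1))$ follows by dropping the additive $(k-1)$ in the numerator when $N \gg k$.

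There is essentially no obstacle here: all the heavy lifting is done by Theorem~\ref{Algo1Theorem} itself, and this corollary is a one-line substitution plus summation. The only minor point worth remarking on is that the step of reordering classes in decreasing size (step 2 of Algorithm~\ref{Algo1}) is vacuous under $\boldsymbol{\pi}^u$, so no optimality argument needs to be invoked for the uniform case; any ordering attains the same rate.
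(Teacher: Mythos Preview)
Your proposal is correct and follows exactly the same route as the paper: substitute $\pi'_i=(k-1)/N$ into the formula of Theorem~\ref{Algo1Theorem} and evaluate the resulting arithmetic sum. The paper's own proof is in fact terser than yours (it simply writes the sum and asserts the result), so your added computation of $\sum_{i=1}^{M}i$ and the remark about ordering being vacuous under $\boldsymbol{\pi}^u$ are harmless elaborations.
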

As evident in Corollary~\ref{Algo1Cor1}, with $k$IC and for large $N$, the query rate of Algorithm~\ref{Algo1} improves linearly with $k$. 

In the next corollary, we consider the non-uniform class distributions presented in \eqref{NonuniformClassEq1} and \eqref{NonuniformClassEq2} with $k$IC, $2\leq k <N$, and present bounds on the query rate. The bounds are obtained considering the two extreme cases, when the classes are arranged in reducing (as prescribed in step 2 of Algorithm~\ref{Algo1}) or increasing order of their sizes. We use the results for insights into the transient performance of Algorithm~\ref{Algo1} in the sequel.
\begin{corollary}
The query rate of Algorithm 1 with $k$IC, large $N$ and nonuniform class distributions $\boldsymbol{\pi}^n(\epsilon)$ and $\boldsymbol{\pi}^n\left(1/\epsilon \right)$, $\epsilon\ll 1$, is given by
\begin{equation}\label{Algo1Cor2Eq1}
\frac{N(1+\epsilon)+k-3}{2(k-1)}\lesssim R_1^{kIC}\left(\boldsymbol{\pi}^n(\epsilon)\right)\lesssim \frac{N+k-1-N\epsilon}{2(k-1)} 
\end{equation}
\begin{equation}\label{Algo1Cor2Eq2}
1+\frac{N\epsilon}{2(k-1)}\lesssim R_1^{kIC}\left(\boldsymbol{\pi}^n\left(1/\epsilon\right)\right) \lesssim \frac{N(2-\epsilon)}{2(k-1)}. 
\end{equation}
where $2\leq k < N$ and $\lesssim$ means $\leq$ up to $O(N\epsilon)$.
\label{Algo1Cor2}
\end{corollary}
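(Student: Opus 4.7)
The plan is to apply Theorem~\ref{Algo1Theorem} to the two distributions $\boldsymbol{\pi}^n(\epsilon)$ and $\boldsymbol{\pi}^n(1/\epsilon)$, and to evaluate the resulting rate expression under two class orderings: the optimal reducing order (prescribed by Algorithm~\ref{Algo1} in steady state) and the worst-case increasing order (which may arise in transient). These two computations produce the two-sided bound, with the reducing order giving the smaller rate in both cases by the rearrangement inequality applied to the linear functional $\sum_i i\,\pi'_i$.

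Operationally, for each of the four (distribution, ordering) combinations I would write the sorted probability list, partition it into $m=N/(k-1)$ groups of size $k-1$, compute the group sums $\pi'_i$, and substitute into $R_1^{kIC}=\sum_i i\,\pi'_i$. In every case exactly one group is ``mixed'' (containing the single distinct class together with some homogeneous neighbors), while the other $m-1$ groups are all homogeneous with $\pi'_i=(k-1)\alpha'/N$; the mixed group carries an additional contribution proportional to $\alpha/N$, weighted by either $1$ or $m$ depending on whether the distinct class sits at the beginning or the end of the sorted list. The homogeneous part collapses to an arithmetic series via $\sum i = m(m\pm 1)/2$, producing a closed-form expression for the rate.

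Substituting the definitions $\alpha=xN/(1+x)$ and $\alpha'=N/((N-1)(1+x))$ together with $m=N/(k-1)$, and then Taylor expanding $1/(1+\epsilon)=1-\epsilon+O(\epsilon^2)$ and $1/(N-1)=1/N+O(1/N^2)$, collapses each of the four rate expressions into the leading form stated in \eqref{Algo1Cor2Eq1}--\eqref{Algo1Cor2Eq2}. Contributions of order $O(N\epsilon^2)$, $O(\epsilon)$, and $O(1/N)$ are dropped and absorbed into the $\lesssim$ notation, whose precision is $O(N\epsilon)$. A useful internal sanity check is that at $\epsilon=0$ every bound collapses to $(N+k-1)/(2(k-1))$, recovering Corollary~\ref{Algo1Cor1}.

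The hard part is bookkeeping rather than a conceptual step: one must place the distinct class at the correct end of the group chain (first for $\boldsymbol{\pi}^n(1/\epsilon)$ under reducing and for $\boldsymbol{\pi}^n(\epsilon)$ under increasing, and last for the opposite pairings) so that the sign of the $N\epsilon$ coefficient and the $O(1)$ additive constant (e.g.\ $k-3$ versus $k-1$) come out as stated. Beyond that, the nominal crossover of the two bounds at $\epsilon=1/N$ serves as a reminder that the statement is only informative in the stated regime $\epsilon N\ll 1$.
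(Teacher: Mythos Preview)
Your proposal is correct and follows essentially the same route as the paper: the paper also evaluates the Theorem~\ref{Algo1Theorem} sum for $\boldsymbol{\pi}^n(x)$ with the distinct class placed either first or last in the query order, obtains closed forms $R^{kIC}_{1,f}$ and $R^{kIC}_{1,l}$ in terms of $x$, $N$, $k$ (your ``mixed group plus arithmetic series'' decomposition), and then specializes to $x=\epsilon$ and $x=1/\epsilon$ with the large-$N$, small-$\epsilon$ expansions you describe. Your identification of which placement corresponds to which bound in each of the four cases matches the paper's.
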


In using Algorithm~\ref{Algo1}, in case we do not know the class distribution in advance, we simply estimate it as we make progress through labeling a dataset. Accordingly, we may only arrange the classes according to the reducing order of their sizes as prescribed in step 2 of the algorithm with the estimate at hand. Corollary~\ref{Algo1Cor2} shows how the knowledge (or lack thereof) of the class distribution can affect the query rate by possibly imperfect arrangement of the classes. Specifically, it shows that the Algorithm~\ref{Algo1} with $k$IC and $\boldsymbol{\pi}^n(\epsilon)$ is robust to handling of a cluster of smaller size, $\epsilon$: It operates at the same query rate order, $O(N)$, disregarding the ordering of the classes. 
An important observation here is how a class of large size impacts the query rate, $R_1^{kIC}$. As evident with $\boldsymbol{\pi}^n(\frac{1}{\epsilon})$, proper ordering of classes can change the query rate order from $O(N)$ to $O(1)$ in this setting. This indicates that in using Algorithm~\ref{Algo1}, in case we do not know the class distribution in advance, the suggested (periodic) ordering of the classes as we make progress could noticeably reduce the query rate in general. 

\section{Query Scheme with $k$IC and Batch Processing}\label{Algo2Sec}
In some machine learning applications, batch processing of queries are preferred. In such settings, all the queries are submitted as tasks at once and the queries do not vary on the fly. In this Section, we present an algorithm that lends itself to batch processing. The scheme is presented in Algorithm~\ref{Algo2} for the general case with $k$IC. The basic idea is that the samples are partitioned randomly to $k$ sample subsets and submitted as queries to the oracle. We merge those samples that match in a query and return one representative of every merged set to form a new batch. This process is repeated in rounds until no further mergers are possible. As evident the scheme easily accommodates the scenario with unknown number and distribution of classes. If the batch is of distribution $\boldsymbol{\pi}$, the next theorem shows how the distribution and the batch size evolve following processing by Algorithm~\ref{Algo2}, and presents the average query rate over $m\ge 1$ rounds, $R^{kIC}_{2, 1:m}$.

\begin{theorem}\label{Algo2Theorem}
When a batch of samples of size $L=L_1\gg 1$, with distribution $\boldsymbol{\pi}$ over $N$ classes, is processed by Algorithm~\ref{Algo2} in a single round by $k$IC, the resulting batch of samples will be of distribution $\boldsymbol{\pi}'$ over the classes, where
\begin{equation}
\pi'_i=\frac{1-(1-\pi_i)^k}{N-\sum_{j=1}^{N} (1-\pi_j)^k}, \hspace{1cm} 1\le i \le N,
\label{EqPiEvolution}
\end{equation}
and the probability that a sample is settled is 
\begin{equation}\label{EqSettled}
P(s \in {\cal L}) = 1-\frac{1}{k}\sum_{j=1}^{N} (1-\pi_j)^k.    
\end{equation} 
Over $m\ge 1$ rounds of Algorithm~\ref{Algo2}, when $L_m\gg 1$, with iterative use of \eqref{EqPiEvolution} and \eqref{EqSettled}, the batch size and the average query rate are given by   
\begin{equation}
\begin{array}{ccc}
L_m &=& L \Pi_{j=1}^{m-1}(1-P(s\in {\cal L}_{j}))\\\\
R^{kIC}_{2, 1:m}(\boldsymbol{\pi}) &=&\frac{1}{k}\frac{\sum_{j=1}^{m}\Pi_{i=1}^{j-1}(1-P(s\in {\cal L}_{i}))}{\sum_{j=1}^m P(s \in {\cal L}_j) \Pi_{i=1}^{j-1}\big(1-P(s\in {\cal L}_{i})\big)},
\end{array}
\end{equation}
where $P(s \in {\cal L}_r)$ is the probability that a sample is settled in round $r$.
\end{theorem}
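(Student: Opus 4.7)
The plan is to establish the one-round update first and then iterate. Fix a single round with batch size $L \gg 1$ and class distribution $\boldsymbol{\pi}$, and condition on the uniformly random partition of the $L$ samples into $L/k$ queries of size $k$. Under $L \gg k$, the probability that any particular query contains at least one sample from class $i$ is $1 - (1-\pi_i)^k$; sampling without replacement contributes only an $O(k/L)$ correction that is negligible. Since Algorithm~\ref{Algo2} returns exactly one representative per distinct class appearing in each query, linearity of expectation gives $(L/k)[1 - (1-\pi_i)^k]$ as the expected number of class-$i$ representatives in the next batch.

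Summing over $i$ yields $L' = (L/k)\bigl[N - \sum_j (1-\pi_j)^k\bigr]$ for the next-batch size, and normalizing the per-class counts gives \eqref{EqPiEvolution}. For \eqref{EqSettled}, I would use the identity that every sample either survives as a representative or is merged away, so the total number of settled samples in the round equals $L - L'$; dividing by $L$ yields the settling probability. To lift these expectation-level equalities to the (essentially) deterministic updates stated in the theorem, I would invoke concentration: the counts are sums of $L/k$ bounded indicators with only the weak correlations induced by the partition, so their relative fluctuations are $O(L^{-1/2})$ and vanish under the hypothesis $L \gg 1$.

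The multi-round statement then follows by straightforward iteration: set $\boldsymbol{\pi}^{(j+1)} := (\boldsymbol{\pi}^{(j)})'$ and $L_{j+1} := L_j(1 - P(s \in \mathcal{L}_j))$, where $P(s \in \mathcal{L}_j)$ is computed from the single-round formula applied to $\boldsymbol{\pi}^{(j)}$. Unrolling the recursion gives $L_m = L \prod_{j=1}^{m-1}(1 - P(s \in \mathcal{L}_j))$. The total number of queries issued in the first $m$ rounds is $\sum_{j=1}^m L_j/k$, while the total number of labeled samples equals the telescoping sum
\begin{equation*}
L_1 - L_{m+1} \;=\; L \sum_{j=1}^m P(s \in \mathcal{L}_j) \prod_{i=1}^{j-1}\bigl(1 - P(s \in \mathcal{L}_i)\bigr),
\end{equation*}
via the standard identity $1 - \prod_j(1 - a_j) = \sum_j a_j \prod_{i<j}(1-a_i)$. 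Dividing queries by labeled samples delivers the claimed expression for $R^{kIC}_{2, 1:m}$.

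The main obstacle is the passage from expectation to deterministic update at each round: \eqref{EqPiEvolution} is really a law-of-large-numbers statement that is exact only in the $L \to \infty$ limit. This difficulty propagates through the iteration, which is precisely why the theorem assumes $L_m \gg 1$ at every round — the concentration bound must stay sharp even after the batch has already been shrunk several times and the distribution has drifted to $\boldsymbol{\pi}^{(m)}$. Modulo carefully invoking concentration (e.g., a Hoeffding-type bound for the number of queries containing a given class, with a minor correction for the dependence inherent in a random partition), the remainder of the proof is algebraic bookkeeping around the one-round calculation and the telescoping identity above.
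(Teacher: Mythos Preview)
Your proposal is correct and follows essentially the same approach as the paper: both compute the one-round update by evaluating the probability $1-(1-\pi_i)^k$ that a query contains class $i$, then normalize and take complements, and both obtain the multi-round rate as total queries over total settled samples via the same products and sums. The only cosmetic difference is that the paper packages the per-query calculation through an auxiliary output random variable $Y$ over the $k$ slots (computing $P(Y=i)=\tfrac{1}{k}\bigl(1-(1-\pi_i)^k\bigr)$ via a binomial sum) rather than counting returned representatives directly as you do; your explicit invocation of concentration is, if anything, more careful than the paper's passing appeal to $L\gg 1$.
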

\begin{proof}
The proof is provided in the appendix. 
\end{proof}

The next Corollary specializes the results to the case where the dataset is uniform, i.e., $\boldsymbol{\pi}=\boldsymbol{\pi}^u$.
\begin{corollary}\label{Algo2Cor1}
When a large batch of $L$ samples with uniform class distribution is processed with Algorithm~\ref{Algo2}, the class distribution remains unchanged in consecutive rounds, $1\le r \le m$, and we have
\begin{equation}
P(s\in {\cal L}_r) = 1-\frac{N}{k}\big(1-(1-\frac{1}{N})^k\big),
\end{equation}
when $L_m=L(1-P(s \in {\cal L}))^m \gg 1$. As such, the average query rate over round $r$ or until round $r$, $1\le r \le m$, is given by 
\begin{equation}
R^{kIC}_{2,r}(\boldsymbol{\pi}^u)=R^{kIC}_{2,1:r}(\boldsymbol{\pi}^u)=\frac{1}{k-N\big(1-(1-\frac{1}{N})^k\big)}\approx\frac{2N}{k(k-1)},
\end{equation}
where the approximation in the right hand side holds for $N\gg k>1$.
\end{corollary}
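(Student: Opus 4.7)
My plan is to verify the corollary by direct substitution of $\pi_i = 1/N$ into the formulas of Theorem~\ref{Algo2Theorem} and then collapsing the round-averaged rate expression via a geometric-series argument. First, plugging $\pi_i = 1/N$ into \eqref{EqPiEvolution}: the numerator $1-(1-1/N)^k$ is the same for every class $i$, and the denominator $N-N(1-1/N)^k$ equals $N$ times that common numerator, so $\pi_i' = 1/N$ for every $i$. Hence the uniform distribution is a fixed point of a single round of Algorithm~\ref{Algo2}, and by induction on $r$ it remains the input distribution at every subsequent round---provided the large-batch hypothesis $L_r \gg 1$ underlying Theorem~\ref{Algo2Theorem} continues to hold. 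The latter follows from the stated assumption $L_m = L(1-P(s \in {\cal L}))^m \gg 1$, since the batch size $L_r$ is monotonically non-increasing in $r$.

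Because every round has the same uniform input distribution, the per-round settled probability $P(s \in {\cal L}_r)$ is independent of $r$; substituting $\pi_j = 1/N$ into \eqref{EqSettled} and simplifying yields the claimed constant $p := 1 - \frac{N}{k}[1-(1-1/N)^k]$. I would then substitute $P(s\in {\cal L}_i) = p$ into the general cumulative-rate formula of Theorem~\ref{Algo2Theorem}: both the numerator $\sum_{j=1}^{m}(1-p)^{j-1}$ and the denominator $p\sum_{j=1}^{m}(1-p)^{j-1}$ share the same geometric sum, which cancels to leave $R^{kIC}_{2,1:m}(\boldsymbol{\pi}^u) = 1/(kp)$. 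Because this value is independent of $m$, the per-round rate $R^{kIC}_{2,r}$ must equal the cumulative rate $R^{kIC}_{2,1:r}$, and inserting the expression for $p$ produces the closed form $1/[k-N(1-(1-1/N)^k)]$.

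Finally, for the large-$N$ approximation I would Taylor-expand $(1-1/N)^k = 1 - k/N + \binom{k}{2}/N^2 - O(k^3/N^3)$, giving $k - N[1-(1-1/N)^k] = k(k-1)/(2N) + O(k^3/N^2)$, whose reciprocal is $2N/[k(k-1)]$ as claimed. The argument is essentially mechanical---fixed-point verification plus a geometric-series cancellation---so there is no genuine obstacle. The only subtle point is checking that the large-batch hypothesis of Theorem~\ref{Algo2Theorem} persists across all $m$ rounds, a concern handled uniformly by the condition on $L_m$ already included in the corollary's hypothesis.
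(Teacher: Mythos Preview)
Your proposal is correct and follows essentially the same approach as the paper: substitute $\pi_i=1/N$ into the formulas of Theorem~\ref{Algo2Theorem} to verify that the uniform distribution is a fixed point, read off the constant settled probability, and then simplify the rate; the Taylor expansion for the $N\gg k$ approximation is identical. The only cosmetic difference is that the paper invokes the single-round rate $1/(kP(s\in{\cal L}))$ directly (arguing that identical rounds give identical rates), whereas you carry out the geometric-series cancellation in the cumulative formula explicitly---both routes arrive at the same expression with the same amount of work.
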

\begin{proof}
The proof is provided in the appendix.
\end{proof}

An interesting observation from the above Corollary is that when the class distribution is uniform and $N\gg 1$, then the average query rate improves quadratically with $k$. To our knowledge this is the first algorithm with provably better performance than $O(N/k)$ reported in the literature.

If $\boldsymbol{\pi}$ is non-uniform then chances of a few samples matching in a $k$IC query is higher for samples from bigger clusters. As such, starting from a non-uniform class distribution, in a single round of Algorithm~\ref{Algo2}, it is more likely that samples from bigger clusters, as opposed to those from smaller clusters, are settled. This implies that class probabilities of batches approach a uniform distribution through the processing rounds of Algorithm~\ref{Algo2}. The next Corollary specializes Theorem~\ref{Algo2Theorem} to the case where $\boldsymbol{\pi}=\boldsymbol{\pi}^n(\frac{1}{\epsilon})$, and demonstrates the above intuition in this case. In addition, since in Algorithm~\ref{Algo2}, the queries are formed in a random manner and we do not strategize in selection of the samples in a query, the next corollary shows how a big cluster affects the performance.

\begin{corollary}\label{Algo2Cor2}
When labeling a large batch with nonuniform class distribution $\boldsymbol{\pi}^n(\frac{1}{\epsilon}), \epsilon \ll 1$, using $k$IC and Algorithm~\ref{Algo2}, the class distribution evolves to 
\begin{equation}\label{Algo2Cor2Eq1}
\begin{array}{ccl}
\pi'_N&\approx& 1-k\epsilon\\
\pi'_i &\approx& \frac{k\epsilon}{N-1}; \quad i\in\{1,\hdots,N-1\} 
\end{array}
\end{equation}
over a single round and the average query rate is given by
\begin{equation}
R_2^{kIC}\big(\boldsymbol{\pi}^n\big(\frac{1}{\epsilon}\big)\big)\approx\frac{1}{k-1}+\epsilon'
\end{equation}
where $\epsilon'=\frac{k}{(k-1)^2}\epsilon$.
\end{corollary}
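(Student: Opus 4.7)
The plan is to specialize the two formulas from Theorem~\ref{Algo2Theorem}, namely \eqref{EqPiEvolution} for the evolved class distribution and \eqref{EqSettled} for the per-round settled-sample probability, directly to $\boldsymbol{\pi}^n(1/\epsilon)$ defined in \eqref{NonuniformClassEq2}, then expand to first order in the small parameter $\epsilon$. The whole computation is driven by the single quantity $(1-\pi_i)^k$, which I would tabulate up front: for the dominant class, $(1-\pi_N)^k = \epsilon^k = O(\epsilon^2)$ (for any $k\ge 2$); for each small class, $(1-\pi_i)^k = (1 - \epsilon/(N-1))^k = 1 - k\epsilon/(N-1) + O(\epsilon^2)$. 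Summing gives $\sum_{j=1}^N (1-\pi_j)^k = (N-1) - k\epsilon + O(\epsilon^2)$, so the normalizer $N - \sum_j(1-\pi_j)^k$ that appears in both \eqref{EqPiEvolution} and \eqref{EqSettled} equals $1 + k\epsilon + O(\epsilon^2)$.

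Substituting these into \eqref{EqPiEvolution} immediately gives $\pi'_N = (1-\epsilon^k)/(1+k\epsilon) \approx 1 - k\epsilon$ and $\pi'_i = (k\epsilon/(N-1))/(1+k\epsilon) \approx k\epsilon/(N-1)$ for $i<N$, each to leading order in $\epsilon$. This establishes \eqref{Algo2Cor2Eq1} and matches the heuristic preceding the corollary: after one round the dominant class shrinks by roughly $1-k\epsilon$, with its lost mass redistributed uniformly over the remaining $N-1$ classes.

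For the rate, I would read off the per-round settled probability from the same normalizer as $P(s\in\mathcal{L}) = (k-1-k\epsilon)/k + O(\epsilon^2)$, consistent with the uniform-case specialization in Corollary~\ref{Algo2Cor1}. Since each $k$IC query consumes $k$ samples, the per-labeled-sample rate in one round is $R_2^{kIC}(\boldsymbol{\pi}^n(1/\epsilon)) = (1/k)/P(s\in\mathcal{L}) = 1/(k-1-k\epsilon)$, and a final Taylor expansion $(k-1-k\epsilon)^{-1} = (k-1)^{-1}(1 - k\epsilon/(k-1))^{-1} \approx 1/(k-1) + k\epsilon/(k-1)^2$ yields the claimed $R \approx 1/(k-1) + \epsilon'$ with $\epsilon' = k\epsilon/(k-1)^2$.

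The main technical obstacle is the book-keeping of two distinct small scales that coexist in this regime: $\epsilon$ itself, controlling each small class, and $\epsilon^k$, coming from the single large class. I would keep only $O(\epsilon)$ terms systematically throughout and check at the end that dropping both $\epsilon^k$ and $O(\epsilon^2)$ corrections leaves both the distribution and the rate claims intact. A secondary care-point is interpreting $R_2^{kIC}$ as a single-round rate — since after one round the distribution is no longer of the form $\boldsymbol{\pi}^n(1/\epsilon)$, any multi-round refinement would require iterating \eqref{EqPiEvolution} inside $R^{kIC}_{2,1:m}$ — but at the $O(\epsilon)$ approximation adopted here the single-round calculation above already captures the statement as written.
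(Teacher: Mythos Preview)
Your proposal is correct and follows essentially the same route as the paper: specialize \eqref{EqPiEvolution} and \eqref{EqSettled} to $\boldsymbol{\pi}^n(1/\epsilon)$, expand $(1-\pi_i)^k$ to first order in $\epsilon$ for both the large and the small classes, read off $\pi'_N\approx 1-k\epsilon$, $\pi'_i\approx k\epsilon/(N-1)$ and $P(s\in\mathcal{L})=(k-1-k\epsilon)/k$, and then Taylor-expand the single-round rate $1/(kP(s\in\mathcal{L}))=1/(k-1-k\epsilon)$ to obtain $1/(k-1)+k\epsilon/(k-1)^2$. Your added remarks on tracking the two small scales $\epsilon$ and $\epsilon^k$ and on the single-round interpretation of $R_2^{kIC}$ are sound and go slightly beyond what the paper spells out.
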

\begin{proof}
The proof is provided in the appendix.
\end{proof}
The above corollary shows that a class with much bigger size can drastically affect the performance and make its rate approximately independent of the number of classes, $N$. It is evident in \eqref{Algo2Cor2Eq1} how the size of the bigger cluster is reducing and those of the smaller clusters are increasing over a single round of Algorithm \ref{Algo2} towards a uniform class distribution. We will elaborate on this further in Section \ref{PerfSec}.

\begin{figure}[!t]
 \removelatexerror
  \begin{algorithm}[H]
\begin{enumerate}
\item Initialize the batch with the dataset to be labeled.
    \item Partition the batch into subsets of cardinality $k$.
    \item Query the oracle using the $k$IC on each subset.
    \item Merge the matched samples in each query. Return one representative of each collection of merged samples to the batch.
    \item If no merger is possible in step 4, stop; Otherwise, continue to step 2.
\end{enumerate}
\caption{$k$IC Query Scheme with Batch Processing}
\label{Algo2}
\end{algorithm}
\end{figure}

\section{Greedy Algorithm with $3$IC}\label{Algo3Sec}
\subsection{Algorithm Description} \label{Algo3Desc}
In this Section, we present an adaptive greedy query algorithm that, similar to the basic scheme in Algorithm~\ref{Algo1}, lends itself suitably to online applications. The proposed algorithm runs in a greedy fashion, i.e., the queries are formed with the purpose of maximizing the number of samples that may be labeled in any query. The algorithm operates in a round robin fashion, i.e., we index the classes (their representatives) in a specific order, and run $k$IC queries with $k-1$ samples from the dataset and one of the class representatives in sequence. In the sequel, we focus on a $3$IC realization of the proposed algorithm for a more clear exposition. The proposed greedy triplet query scheme is described in Algorithm~\ref{Algo3}.

\begin{figure*}[t]
\begin{minipage}[b]{\linewidth}
\centering
  \includegraphics[width=0.8\textwidth]{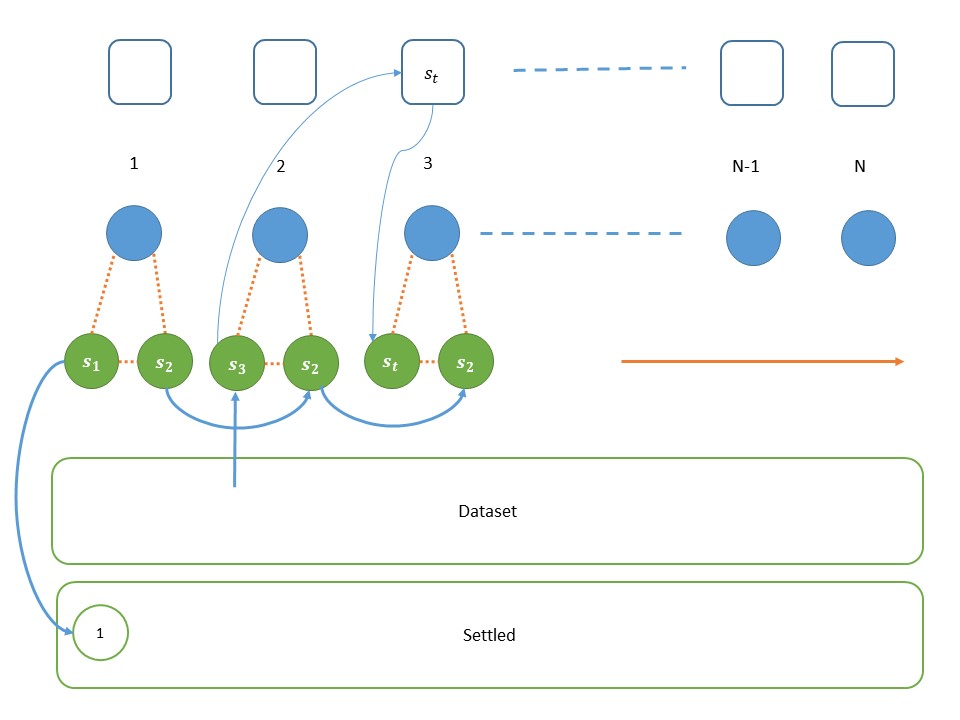}
\end{minipage}
\caption{A basic example of three queries with Algorithm~\ref{Algo3} and $3$IC. The blue circles show the class representatives. The squares with round corners show temporary bins. The green circles are samples to be labeled.}
\label{Algo3Figs}
\end{figure*}
Figure~\ref{Algo3Figs} shows an example of the first three queries in this Algorithm. In the first query, two samples, $s_1$ and $s_2$, are drawn from the dataset for a $3$IC query with the representative of the first bin (class). In this example, $s_1$ matches the bin and is thus labeled (settled). There was no match for $s_2$, so it advances to the next query with class $2$ and a new sample $s_3$ drawn from the dataset. In this example, there was no match in the second query. If we advance both $s_2$ and $s_3$ to the third query with the third class, the $(s_2$, $s_3)$ comparison would be redundant. As such, we store $s_3$ in a temporary bin associated with class $3$, $T_3$, for a future query with this class in the next round. Then $s_2$ advances to the third query with class $3$ and a sample $s_t$ which happens to be in $T_3$ from the previous round.   

The example shows key components of the Algorithm: (i) an ordered set of class representatives, (ii) a round robin query scheme with $3$IC and, (iii) a set of temporary bins one corresponding to each class. The class representatives are assumed given here, but as discussed in Section~\ref{Algo1Sec} they may be identified during the run time of the algorithm alternatively. The example also shows a few questions, answering which are key to design the algorithm: (i) If one or two samples are not settled in the current query, which one do we advance to the next query and (ii) which one(s) do we store in the temporary bin? (iii) When we draw new samples for a new query, when do we draw from the temporary bin and (iv) when do we draw from the dataset?

To motivate the design of the algorithm, we examine the probability that a given sample is settled in a query. Samples are queried with classes in sequence and in general a sample may join and start the query round with any of the $N$ class representatives. To take this into account, we define the {\it length} of a sample, $\ell(s)$, as the number of classes it has been queried with so far. Assume we compare two samples $s_1$ and $s_2$ with length $\ell(s_1)=i$ and $\ell(s_2)=j$ with a bin $s$ and $0 \leq j \leq i < N-1$. In this query, the total number of possibilities for the three element set of the samples and the bin representative is $(N-i)(N-j)$. For a uniformly distributed dataset, we can quantify the potential outcome of each query as follows:

\begin{itemize}
    \item 
	Case a (all three match): one instance; $p_a (i,j) = {1 \over {(N-i)(N-j)}}$
\item
	Case b ($s$ and $s_1$ match): $(N-j-1)$ instances; $p_b(i,j)={{N-j-1} \over {(N-i)(N-j)}}$
\item	
	Case c ($s$ and $s_2$ match): $(N-i-1)$ instances; $p_c (i,j)= {{N-i-1} \over {(N-i)(N-j)}}$
\item	
	Case d ($s_1$ and $s_2$ match): $(N-i-1)$ instances; $p_d(i,j)={{N-i-1} \over {(N-i)(N-j)}}$
\item	
	Case e (no match): $(N-i-1)(N-j-2)$ instances; $p_e(i,j)={{(N-i-1)(N-j-2)} \over {(N-i)(N-j)}}$ 
\end{itemize}

\begin{lemma}
In a single $3$IC query of Algorithm 3 involving two samples of length $i$ and $j$ and a given class representative over a uniform dataset, the number of samples that are successfully settled on average is given by:
\begin{equation}
SL={{2(N-i)+(N-j-1)}\over{(N-i)(N-j)}}.
\nonumber
\end{equation}
\end{lemma}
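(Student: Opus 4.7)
The plan is to apply the law of total expectation by conditioning on the five mutually exclusive and exhaustive cases (a)--(e) enumerated immediately before the lemma statement. For each case I first fix the number of samples that become settled, and then form the weighted sum $\sum_{\star} (\text{count}_\star)\cdot p_\star(i,j)$.

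Determining the per-case settlement counts is the core of the argument. In case (a) all three elements share a class, so both $s_1$ and $s_2$ are absorbed into the queried bin, contributing $2$ to $SL$. In cases (b) and (c) exactly one of $s_1$, $s_2$ matches the bin, contributing $1$ each. In case (e) no pair matches, so no merger is possible in the current query and the count is $0$. The only nontrivial case is (d): $s_1$ and $s_2$ match each other but neither matches the bin representative. Under the round-robin convention of Algorithm~\ref{Algo3} (illustrated by the handling of $s_3$ in Figure~\ref{Algo3Figs}, where a sample that cannot be attached to the current bin is placed in a temporary bin), the pair is fused into a single entity: one sample is designated the representative of their common (still unknown) class and the other is absorbed as a duplicate, so exactly $1$ sample is settled by the merger. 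This single interpretive step is the only place where the algorithmic description, rather than pure counting, enters the proof.

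With the counts fixed, combining terms gives
\[
SL \;=\; 2\,p_a(i,j) + p_b(i,j) + p_c(i,j) + p_d(i,j) + 0\cdot p_e(i,j).
\]
Substituting the expressions for $p_a,\dots,p_d$ listed just above the lemma statement, all four terms share the common denominator $(N-i)(N-j)$, and the numerator simplifies as
\[
2 + (N-j-1) + (N-i-1) + (N-i-1) \;=\; 2(N-i) + (N-j-1),
\]
which is the claimed expression.

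I expect the main obstacle to be conceptual rather than computational: case (d) is the only one in which no sample is matched against a known class, so one must argue carefully that the merger of $s_1$ and $s_2$ counts as exactly one settlement. Once that convention is accepted (and, importantly, is consistent with how Algorithm~\ref{Algo3} creates and manages temporary bins in subsequent rounds), the remainder reduces to the one-line algebraic identity above.
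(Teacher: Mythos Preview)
Your proposal is correct and mirrors the paper's own proof: both compute $SL = 2p_a(i,j) + p_b(i,j) + p_c(i,j) + p_d(i,j)$ by assigning settlement counts $2,1,1,1,0$ to cases (a)--(e) and then substitute the listed probabilities. Your careful discussion of case (d) matches the paper's remark that when $s_1$ and $s_2$ merge, one becomes the representative and the other is regarded as settled.
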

\begin{proof}
The proof is provided in the appendix.
\end{proof}
It is easy to show that $SL$ is a monotonically increasing function of $i$ and $j$, i.e., a sample with larger length is more likely to settle. In the context of Algorithm 3, a greedy approach to labeling which aims at maximizing the number of samples settled in a query, therefore prescribes the following query design strategy.
\begin{itemize}
    \item
    In cases b and c, it is advantageous to advance the current sample in query that is not yet settled to the next query, as opposed to drawing a new sample from the dataset. This sample is of a larger length and is hence more likely to settle next.
    \item
    In case d, when the two samples match, it is obviously advantageous to take the sample with larger length as the representative and advance it to the next query.
    \item
    In case e, where there is no match, it is better to advance the sample with larger length to the next query. We keep the other sample in the so-called temporary bin of the subsequent class to be queried in the next {\it round}.	There is a temporary bin corresponding to each of the classes, which can store one sample at a time: $\mathcal{T}=\{T_1, \hdots, T_N \}$. By next round, we mean the next time the subsequent class will be queried in the round robin scheme of Algorithm 3. This is to avoid forming redundant queries in an efficient manner.
    \item
    In forming a new query, we take the sample that is advanced from the previous query if available (in cases b, c, d, and e); otherwise we draw one new sample from the dataset. We also take another sample from the corresponding temporary bin (from the previous round) if available, otherwise, we take another new sample from the dataset to form the query.
\end{itemize}
The proposed triplet query scheme is described in Algorithm~\ref{Algo3}. In the sequel, we analyze the performance of this algorithm. The analysis is set up based on a Markov model for the algorithm, computing its transition probabilities and then the query rate of the algorithm.

\begin{figure}[h]
 \removelatexerror
  \begin{algorithm}[H]
\begin{enumerate}
    \item 
	Set Up:
	\begin{enumerate}
	    \item 
	There are $N$ classes with one representative sample in its corresponding bin.
	\item
	The classes are ordered and numbered in sequence in a preferred manner.
	\item
	There is a temporary bin corresponding to each of the classes, which can store one sample at a time: $\mathcal{T}=\{T_1, \hdots, T_N \}$. Initialize with $T_i=\emptyset$.
	\item
	During the query process, the classes are examined in sequence in a round robin fashion. Start from the first class $i=1$. 
	\item
	For each sample $s$ drawn from the dataset during this query process
	\begin{enumerate}
	    \item 
	    Consider (the index of) the class with which it started the query process as $start(s)$.
	    \item
	    Consider length of a sample, $\ell(s)$, as number of classes it has been queried with so far.
	\end{enumerate}
    \item
	Initialize the variable $sample2advance = \emptyset$. This variable stores the sample from the current query that is not settled and is intended to advance to the next query.
	\end{enumerate}
\item
Identify two samples for a triplet query with bin $i$
\begin{enumerate}
    \item
If $sample2advance \neq \emptyset$, take $s_1 \xleftarrow{} sample2advance$, otherwise draw a new sample as $s_1$. Set $sample2advance = \emptyset$
    \item
If $T_i \neq \emptyset$, $s_2 \xleftarrow{} T_i$, otherwise draw a new sample as $s_2$
\item
Update the length of the samples
\end{enumerate}
	\item
	Query $(s_1,s_2)$ with bin $i$.  
\begin{enumerate}
\item 
If both samples match with the bin, drop them in the bin. 
\item
If $s_1$ matches with the bin, but not $s_2$, place $s_1$ in the bin, $sample2advance \xleftarrow{}s_2$. 
\item
If $s_2$ matches with the bin, but not $s_1$, place $s_2$ in the bin, $sample2advance \xleftarrow{}s_1$. 
\item
If the samples match each other, but not with the bin. Merge the two samples. Set $sample2advance\xleftarrow{} \text{arg} \max (\ell (s_1), \ell(s_2))$
\item
If none of the samples matches, set $sample2advance\xleftarrow{}\text{arg} \max (\ell (s_1), \ell(s_2))$, place the other sample in temporary bin $i+1$. \end{enumerate}
\item
Continue from step 2 with the next bin (in a round robin fashion) until all samples are labeled. Note that $start(s)$ and $\ell(s)$ in 1.e together characterize the bins any sample has been queried with so far.
\end{enumerate}
\caption{Greedy Triplet Query Algorithm}
\label{Algo3}
\end{algorithm}
\end{figure}

\subsection{Performance Analysis}
In order to analyze the query rate of the Algorithm 3 in the steady state, we consider the following definitions. We define the event $E_{i,j}$ as the event that two samples of length $i$ and $j$ meet in a $3$IC query, i.e., 
\begin{equation}
E_{i,j}=
\begin{cases}
1 & \text{if} \quad \ell(s_1)=i,\ell(s_2)=j,(s_1,s_2) \in \mathcal{Q}\\
0 & \text{otherwise};
\end{cases}
\end{equation}
Note that the other sample in the $3$IC query in Algorithm~\ref{Algo3} is a class representative. 
We also define the state of the Algorithm as $S\in \mathcal{S}$, where
\begin{equation}
\begin{array}{lcl}
\mathcal{S}&=&\{S_{0,0},S_{1,0},S_{1,1},S_{2,0},S_{2,1},S_{2,2},\hdots,\\
&&S_{N-2,0},S_{N-2,1},\hdots,S_{N-2,N-2}\}
\end{array}
\end{equation}
and the state $S_{i,j}:\equiv E_{i,j}$. In other words, 
\begin{equation}
P(S_{i,j})=\text{Pr}\{E_{i,j}=1\}.    
\end{equation}
The total number of states is $N\!S=N(N-1)/2$. We also consider a vectorized version of the set of states $\mathcal{S}$ as ${\bf S}$, and when needed denote the state in time $t$ as $S^t \in \mathcal{S}$. The probability matrix of state transitions is then given by $\Pi := {\big [ P(S^t|S^{t-1}) \big ]}$. To derive the average query rate of Algorithm 3, we need a few computational prerequisites, which are presented in the next subsection. We complete the analysis in Section~\ref{Algo3Analysis}.

\subsubsection{States, Transitions and State Probabilities}\label{StateSec}
To setup the analysis, we formulate a state diagram of the Algorithm and aim to compute the stationary state probabilities. To this end, we first crystalize the outputs and transitions between the states and then derive the transition probability matrix $\Pi$. We can then obtain $P({\bf S})$ by solving a system of linear equations considering the following 
\begin{equation}
P({\bf S}^{t}) = P({\bf S}^{t-1})\times \Pi      
\end{equation}
in the steady state when $P({\bf S}^{t})=P({\bf S}^{t-1})$ and taking into account that the sum of state probabilities are one.

The transition probability is affected by the cases a, b, c, d, and e that may occur in a query, as described in Section~\ref{Algo3Desc} and Algorithm~\ref{Algo3}. The Algorithm output in each state is summarized below. Recall that in state $S_{i,j}$ two samples $s_1$ and $s_2$ with lengths $\ell(s_1)=i$ and $\ell(s_2)=j$, $0\leq j \leq i < N-1$, participate in a query with a class $s$. 

\begin{itemize}
\item 
	Case a (all three match): Two samples are labeled with length $i+1$ and $j+1$, no sample advances
\item 
	Case b ($s$ and $s_1$ match): One sample is labeled with length $i+1$, the sample with length $j+1$ advances 
\item 
	Case c ($s$ and $s_2$ match): One sample is labeled with length $j+1$, the sample with length $i+1$ advances
\item 
Case d ($s_1$ and $s_2$ match): One sample is labeled with length $j+1$, the sample with length $i+1$ advances
\item 
	Case e (no match): No labels, the sample with length $i+1$ advances and the sample with length $j+1$ is stored in the temporary bin
\end{itemize}

From the above description, it is now evident that we can only transition from $S_{i,j}$ to $S_{i+1,l}$ or $S_{l,j+1}$ or $S_{l,0}$; $0\leq l \leq N-2$. Here the sample of length $l>0$ comes from the temporary bin. The length being zero corresponds to the case where there is no sample in the temporary bin and we draw a new sample. We can now compute $\Pi$ as reported in the appendix. Since $\Pi$ is symmetric, only the lower triangular half of the matrix is presented.
As evident, the transition probabilities in $\Pi$ depend on the probability of a sample with a given length being in the temporary bin. They also depend on the unknown probabilities of states. So, one may invoke an iterative solution to this system of equations, starting from an initial value for $P(S_{i,j})$.

\subsubsection{Analysis of Query Rate}\label{Algo3Analysis}
We are now able to complete the analysis and obtain the query rate of the Algorithm~\ref{Algo3}.

\begin{theorem}\label{Algo3Theorem}
The average query rate of Algorithm~\ref{Algo3} for a uniformly distributed dataset is given by:
\begin{equation}
R_3^{kIC}=1/2 \sum_{q=1}^{N-1}q \times P(Q=q)    
\nonumber
\end{equation}
where for $1\leq q \leq N-1$
\begin{equation}\label{Event1Eq}
\begin{array}{rcl}
P(Q=q) &=& \prod_{k=1}^{q-1} \big(1-P(s \in \mathcal{L}_{k}|s \notin \mathcal{L}_1,\hdots, s \notin \mathcal{L}_{k-1})\big)\\
&\times&P(s \in \mathcal{L}_q | s_1 \notin \mathcal{L}_1, \hdots, s \notin \mathcal{L}_{q-1})
\end{array}
\end{equation}
and
\begin{equation}
\begin{array}{l}
P(s \in \mathcal{L}_k | s \notin \mathcal{L}_1, \hdots, s \notin \mathcal{L}_{k-1}) = \\
\sum_{i=0}^{N-2} P(S_{i,k-1}) \times P(s \in \mathcal{L}_{k}|{E}_{i,k-1}=1)
\end{array}
\end{equation}
and
\begin{equation}\label{ProbSuccessEq}
\begin{array}{ll}
P(s\in \mathcal{L}_{j+1}|{E}_{i,j}=1)=&\\
\begin{cases}
  \begin{cases}
p_b (i,j)+p_a(i,j) & i<j \\
p_c(i,j)+0.5p_d(i,j)+p_a(i,j) & i=j\\
p_c (i,j)+p_d(i,j)+p_a(i,j) & i>j   
  \end{cases}
  & 0 \leq j < N-2\\
1 & j=N-2
\end{cases}&
\end{array}
\end{equation}
and $P(S_{i,j})$ is obtained as described in Section~\ref{StateSec}.
\end{theorem}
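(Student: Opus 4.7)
The plan is to decompose the query rate in the standard way $R_3^{3IC} = \tfrac{1}{\kappa}\mathbb{E}[Q]$ with $\kappa=2$ (each $3$IC query contains one class representative and two dataset samples to be labeled), and then to compute $\mathbb{E}[Q]=\sum_{q=1}^{N-1} q\, P(Q=q)$ by expressing $P(Q=q)$ through the round-by-round survival of a tracked sample. Because the algorithm proceeds round-robin and any sample is queried against each of the $N$ classes in order, a sample can fail at most $N-2$ times before being forced to match on the $(N-1)$-st attempt (hence the upper limit $N-1$ in the sum and the boundary condition $j=N-2$). For the intermediate $q$'s, the chain rule on $E_q$ gives the product form \eqref{Event1Eq}: the tracked sample must fail queries $1,\ldots,q-1$ and then succeed at query $q$.

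Next I would compute $P(s\in\mathcal{L}_k\mid s\notin\mathcal{L}_1,\ldots,s\notin\mathcal{L}_{k-1})$ by conditioning on the length $i\in\{0,\ldots,N-2\}$ of the \emph{other} dataset sample that meets our tracked sample (whose length is $k-1$) in the $k$-th query. This partner is supplied either from the temporary bin $T_k$ or (if empty) drawn fresh from the dataset. By the Markov description in Section~\ref{StateSec}, the joint configuration ``tracked sample of length $k-1$ meets a sample of length $i$'' is precisely the state $S_{i,k-1}$ (or $S_{k-1,i}$, depending on ordering); summing over $i$ against $P(S_{i,k-1})$ yields the stated total-probability decomposition.

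The core combinatorial step is computing $P(s\in\mathcal{L}_{j+1}\mid E_{i,j}=1)$ from the five disjoint outcomes $\{a,b,c,d,e\}$ described in Section~\ref{Algo3Desc}, whose probabilities $p_a(i,j),\ldots,p_e(i,j)$ were derived under the uniform dataset assumption. In case $a$ the tracked sample is always labeled. In cases $b$ and $c$ exactly one of $s_1,s_2$ is labeled (the one matching the class representative), so whether the tracked sample is settled depends on whether it plays the role of $s_1$ (the ``advancing'' slot, of length $i$) or $s_2$ (the ``temp-bin'' slot, of length $j$). In case $d$ the two samples merge and, per the greedy rule, the longer one advances while the shorter is effectively absorbed (``labeled''). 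In case $e$ neither is labeled. Sorting these contributions gives the three subcases of \eqref{ProbSuccessEq}: if $i>j$ the tracked sample is the shorter $s_2$, so it is settled in $\{a,c,d\}$; if $i<j$ it is the longer $s_1$, so it is settled only in $\{a,b\}$; and if $i=j$ the tie in case $d$ is broken uniformly, contributing the factor $\tfrac12\, p_d$. The boundary value $1$ at $j=N-2$ follows because after being queried against every class but one, elimination forces the remaining class to match.

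The main obstacle is the bookkeeping around the asymmetric roles of $s_1$ and $s_2$ in case $d$ and in the ordering conventions of the states $S_{i,j}$. Once \eqref{ProbSuccessEq} is established and $P(S_{i,j})$ is obtained from the stationary equations $P(\mathbf{S})=P(\mathbf{S})\Pi$ together with $\sum P(S_{i,j})=1$ as in Section~\ref{StateSec}, the three displayed equations in the theorem assemble directly: one plugs the case-analysis into the sum over $i$, substitutes into the product \eqref{Event1Eq}, multiplies by $q$, sums, and divides by $\kappa=2$.
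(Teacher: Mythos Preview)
Your proposal is correct and follows essentially the same approach as the paper: the chain-rule decomposition of $P(Q=q)$, conditioning on the partner's length via the stationary state probabilities $P(S_{i,k-1})$, the case-by-case analysis over outcomes $a$--$e$ to obtain \eqref{ProbSuccessEq}, and the boundary condition at $j=N-2$ all match the paper's argument (in fact your justification of the three subcases in \eqref{ProbSuccessEq} is more explicit than the paper's, which simply points back to the event descriptions in Section~\ref{Algo3Desc}).
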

\begin{proof}
The proof is provided in the appendix.
\end{proof}
\begin{remark}
The description and analysis of Algorithm~\ref{Algo3} presented so far focuses on labeling a uniform dataset. We can improve this algorithm for handling of nonuniform datasets as described below.
\begin{itemize}
    \item 
In Step 1.b: Sort the classes in decreasing order of their sizes
\item
In Step 3: 
\begin{itemize}
    \item 
In case a, re-start from the first class (bin)
\item
In cases b and c,
\begin{itemize}
    \item
If temporary bin for the next class is not empty, continue to the next class as usual
\item
If temporary bin for the next class is empty, drop the advancing sample in this bin, re-start from the first class
\end{itemize}
\item
In case d, act as usual and proceed to the next class
\end{itemize}
\end{itemize}
Again, the motivation here is to enhance the likelihood of labeling samples in a query.
\end{remark}

\section{Performance Evaluation}\label{PerfSec}
\begin{figure*}[t!]
\centering
\begin{minipage}[b]{0.45\linewidth}
\includegraphics[width=\textwidth]{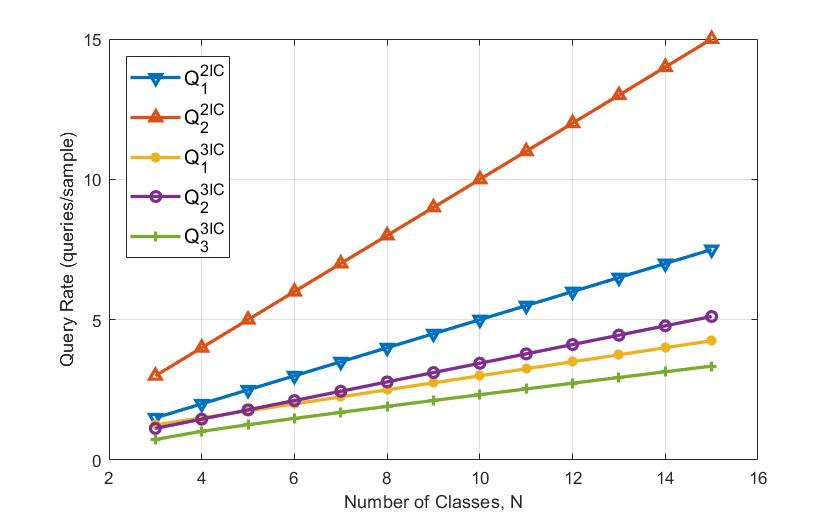}
\subcaption{}
\label{FigKICVSN}
\end{minipage}
\qquad
\begin{minipage}[b]{0.45\linewidth}
  \vspace*{\fill}
  \centering
  \includegraphics[width=\textwidth]{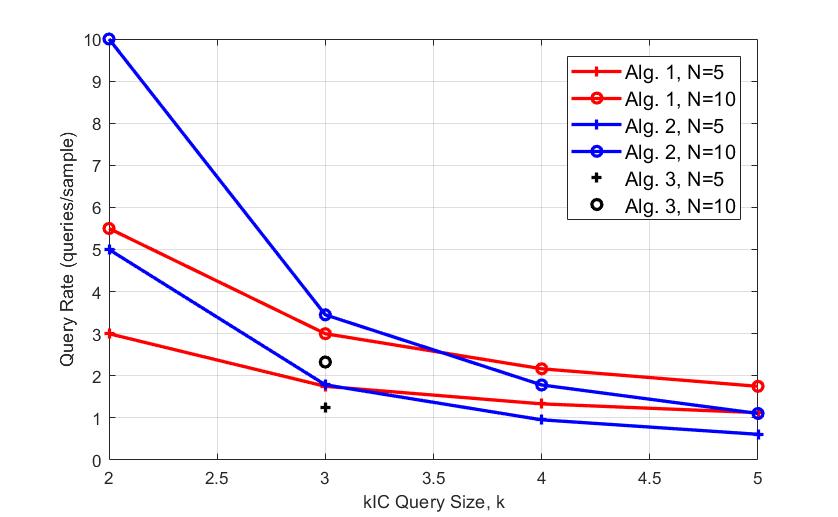}
  \subcaption{}
  \label{FigKICVSk}\par\vfill
\end{minipage}
\caption{\small{Query rate of the proposed algorithms for labeling of a dataset with uniform class distribution (a) as a function of number of classes $N$ for $2$IC and $3$IC query schemes, (b) as a function of number of samples $k$ in a $k$IC query for $N=5, 10$.}} \label{FigCompKIC}
\end{figure*}
Figure \ref{FigCompKIC} shows the query rate of the proposed schemes for labeling of a dataset with uniform class distribution and $k$IC queries. As evident, in case of $3$IC, the greedy adaptive Algorithm 3 performs best with a query rate of $\approx 0.2N$. The Algorithm 2 is appealing since it is randomized and runs in batches, however, in terms of average query rate it only surpasses that of the basic sample-by-sample Algorithm 1 for $k\ge 4$. This is evident both in Fig. \ref{FigKICVSk} and from Corollaries \ref{Algo1Cor1} and \ref{Algo2Cor1}. 
Specifically the Corollary \ref{Algo2Cor1} shows an appealing quadratic improvement of query rate with $k$ for Algorithm~\ref{Algo2}. 

\begin{figure*}[t!]
\centering
\begin{minipage}[b]{0.45\linewidth}
\includegraphics[width=\textwidth]{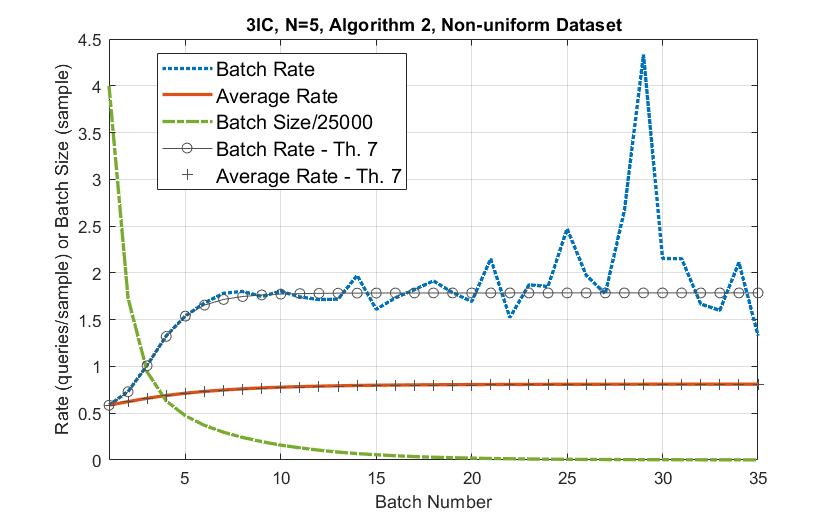}
\subcaption{}
\label{FigAlgo2a}
\end{minipage}
\qquad
\begin{minipage}[b]{0.45\linewidth}
  \vspace*{\fill}
  \centering
  \includegraphics[width=\textwidth]{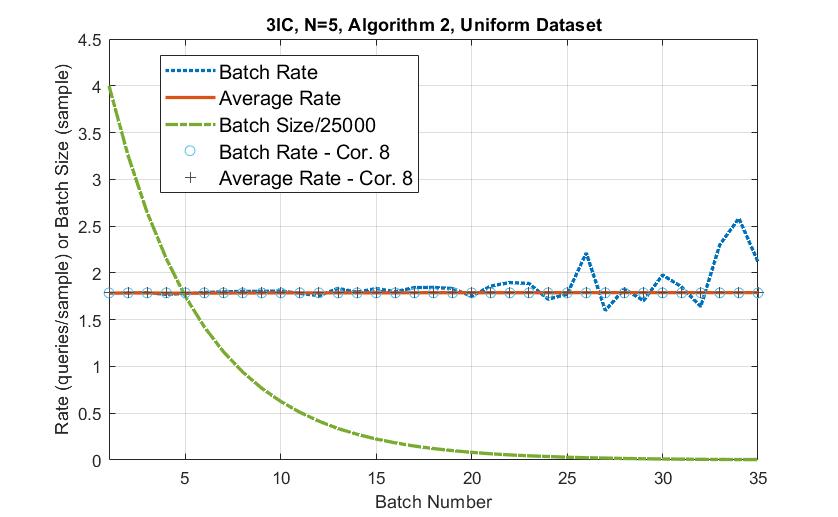}
  \subcaption{}
  \label{FigAlgo2b}\par\vfill
\end{minipage}
\caption{Performance of Algorithm 2 with batch query processing for $L=100000$, (a) $3$IC with nonuniform class distribution  $\boldsymbol{\pi}^n(10)=\{0.9\,\, 0.025\,\, 0.025\,\, 0.025\,\, 0.025\}$; (b) $3$IC with uniform class distribution}\label{FigAlgo2Performance}
\end{figure*}
Figure \ref{FigAlgo2Performance} demonstrates the performance of Algorithm 2 with batch processing for datasets with either a uniform or nonuniform class distribution $\boldsymbol{\pi}^n$ in \eqref{NonuniformClassEq2} for $x=10$. One sees that the batch size quickly shrinks (large number of samples settle) in the early rounds of the Algorithm. This is much more noticeable with nonuniform class distribution, since it is more likely that the two samples match in a $k$IC query when we have large cluster sizes. Interestingly, starting with a nonuniform class distribution, as the Algorithm proceeds, the class distribution of the remaining unlabeled samples evolves to a uniform distribution. This is both evident when we compare the batch rate in Figs. \ref{FigAlgo2a} and \ref{FigAlgo2b} and also from Corollary \ref{Algo2Cor2}. The results in section \ref{Algo2Sec} apply to the cases with large batch sizes, since the analysis relies on the effectiveness of the law of large numbers. As such, as soon as the number of (unlabeled) samples in the batch drop below a few hundred, the batch rate deviates from that predicted by Theorem \ref{Algo2Theorem} in a rather random manner. However, because of the small batch sizes at this stage of the Algorithm, this hardly affects the average query rate over the batches, which is accurately predicted by the said theorem.

Theorems \ref{Algo1Theorem} and \ref{Algo2Theorem} present closed form expressions of the query rates of Algorithms \ref{Algo1} and \ref{Algo2} for labeling a dataset with an arbitrary class distribution. Specializations for a uniform distribution and particular nonuniform distributions are presented in subsequent corollaries. Theorem \ref{Algo3Theorem} presents the query rate of Algorithm \ref{Algo3} that can be computed numerically for a dataset with uniform class distribution. 
Table \ref{tab:summary_results} summarizes the results of the presented analyses for the query rates of the Algorithms in selected cases of interest. As evident, the distribution of dataset plays a critical role. For the nonuniform distribution $\boldsymbol{\pi}^n(\frac{1}{\epsilon})$, the query rate of Algorithm~\ref{Algo1} is $(k-1)+0.5N\epsilon$ times that of Algorithm~\ref{Algo2}. For a uniform dataset, the said rate ratio is $0.25k$, showing advantage for Algorithm~\ref{Algo2} only for $k$'s after $4$. The advantage grows linearly with $k$ in both cases. Algorithm~\ref{Algo3} is presented for $k=3$. The analysis for uniform distribution clearly shows its query rate advantage.

We ran an experiment with the proposed Algorithms and the Stanford Dogs Dataset \cite{khosla2011novel}. The dataset we used contains $300$ images from six dog breeds. The Algorithms assume an Oracle, an expert on the subject. Volunteers participated in groups of five to label the images using each of the Algorithms for $k=2$ and $k=3$.  We also tested Algorithm~\ref{Algo1} with $k=N+1=7$. The volunteers were first presented with five sample images from each class for better understanding of the classes, prior to the tests. In each test, each of the participants were presented with queries constructed based on the Algorithms and $50$ random images drawn uniformly from the dataset. We made a few observations: (1) The query rate empirically computed in these experiments matches the presented analysis expectedly well. As the analysis of Algorithm~\ref{Algo2} relies on large batch sizes and we used only $50$ images for brevity, the empirical rate slightly overestimates what the analysis predicts. (3) Training of the participants greatly affected their accuracy. Their level of interest in the experiment also plays a role. With our non-expert participants average class error rates of about $10\%$ was typical. (4) The time it takes for handling a triplet query ($3$IC) is almost $50\%$ higher than that for a pairwise query ($2$IC). That is even though going from $2$IC to $3$IC involves three times more comparisons (querying three edges as seen in Figure \ref{3ICFig} as opposed to one), the effort required essentially increases by the number of new images added.  This is in line with the observations made in \cite{Ramya88}; observing and understanding the images (and hence their number) is the determining factor in the required time and not inputting the results. (5) We ran a test with Algorithm~\ref{Algo1} and $k=N+1=7$. The processing time was again almost proportionate, but somewhat less than the prediction. The decrease may be attributed to the fact that the $N$ class representatives remain constant in all our queries in Algorithm~\ref{Algo1}. We note that drawing broad conclusions in this regard demands extensive future research, that is well beyond the scope of the current work.

\begin{table*}[t]
    \centering
    \begin{tabular}{|c|c|c|c|c|c|}
         \hline
         Scheme & Alg. 1 $\boldsymbol{\pi}^u$ &  Algo. 1 $\boldsymbol{\pi}^n$ & Alg. 2 $\boldsymbol{\pi}^u$ & Alg. 2 $\boldsymbol{\pi}^n$ & Alg. 3 $\boldsymbol{\pi}^u$\\ \hline 
         Reference & Corr. 1 & Corr. 2 & Corr. 3 & Corr. 4 & Th. 3\\ \hline \hline
         $k=2$ & $\frac{N+1}{2}$ & $\cong 1+\frac{N\epsilon}{2}$ & $N$ & $\cong 1$&\\ \hline
         $k=3$ & $\frac{N+2}{4}$ & $\cong 1+\frac{N\epsilon}{4}$ & $\frac{N^2}{3N-1}$& $\cong \frac{1}{2}$&$\approx 0.2N$\\ \hline
         $k$ & $\frac{N+k-1}{2(k-1)}$ & $\cong 1+\frac{N\epsilon}{2(k-1)}$ & $\simeq\frac{2N}{k(k-1)}$ & $ \cong \frac{1}{k-1}$&\\ \hline
    \end{tabular}
    \caption{Summary of query rate analyses results for $k$IC with uniform class distribution $\boldsymbol{\pi}^u$ and nonuniform class distribution  $\boldsymbol{\pi}^n(\frac{1}{\epsilon}$). Results are for $2\leq k< N$ and in Algorithm~\ref{Algo1} for $k-1|N$. $\cong$ means equality up to $O(N\epsilon)$, $\approx$ is numerically approximately equal, $\simeq$ means asymptotically equal for $N\gg k>1$.}
    \label{tab:summary_results}
\end{table*}

\section{Concluding Remarks}\label{ConcSec}
This paper studied efficient ways of labeling a dataset using $k$IC queries from an oracle when no similarity function is available to relate the objects. In a $k$IC query, $k$ objects are posed at once, and it is expected that the respondent identifies the similar ones in return. The problem is important in machine learning from multiple perspectives: (1) Curating training datasets is a key step in design of supervised learning algorithms, which is in general a time consuming and costly process; (2) In clustering problems, such as entity resolution, a similarity function is often used to strategize, which samples to query (from crowd or an oracle). The similarity functions are typically application specific. As such, the (application agnostic) algorithms and results in this paper serve as benchmarks of what labeling efficiency can be achieved when no similarity function is indeed available; (3) In the same direction, the algorithms presented here can be used as the initial stage of an active learning scheme, when the data is still unknown and no similarity function is yet available. Similar situation could arise when dealing with subjective issues.

The summary of contributions and the insights obtained in this paper is as follows: We presented three algorithms for efficient labeling of a dataset using $k$IC queries to an oracle when no similarity function is available. A direct query for labeling of an object would require the respondent to provide the class it belongs to among $N$ possible classes (equivalent to $k$IC with $k=N+1$). In contrast, a $k$IC query (with $k<N$) operates based on comparisons and identifying (dis)similar objects in a set of $k$ objects. This provides a set of design possibilities for efficient query schemes that take advantage of the transitive relations and are potentially less (cognitively) complex and/or more reliable. This is visibly evident in cases with large number of classes, $N$, as it typical for example in entity resolution applications. Lemma~\ref{TheLemma} quantified the redundancy that is captured by exploiting transitive relations using $k$IC queries. The result presented in this Lemma may be of independent interest in the contexts of graph theory or number theory. The first presented algorithm in this work is a basic sample-by-sample scheme, whereby each sample is queried with $k-1$ class representatives at once. The second algorithm is a batch algorithm which operates in rounds with minimal complexity, in the sense that its storage complexity is simply the same minimum $O(L)$ complexity required to store the collected labels. The third algorithm is an adaptive greedy algorithm, which not only provides a superior query rate, but it also lends itself well to online applications. Specifically as an example, for a dataset of samples with uniform distribution from $N=10$ classes, Algorithm~\ref{Algo3} reduces the query rate by more than $40\%$, when compared to the basic Algorithm~\ref{Algo1} or the batch Algorithm~\ref{Algo2}. 
We analyzed the performance of the algorithms for uniform and certain types of nonuniform class distributions (when possible). While the analysis of the first two algorithms led to closed-form expressions, that for Algorithm~\ref{Algo3} provided a numerically computable solution. The techniques used for the latter may be of independent interest.  

A particular nonuniform class distribution of interest is one where there is a class with substantially larger size and the rest of the classes have similar sizes. This is an abstraction of a distribution with power law decay for which the presented analysis provides interesting insights. Such distributions are of high practical interest as many real datasets, e.g., income data, follow a Pareto-type distribution. Specifically our results indicate the importance of ordering of class (representatives) in sample-by-sample query Algorithms~\ref{Algo1} and \ref{Algo3}. For Algorithm~\ref{Algo1}, proper ordering of classes in comparison to the opposite can change the query rate order from $O(N)$ to $O(1)$. The results also showed that the proposed batch algorithm can exploit the nonuniformity effectively, with an advantage over Algorithm\ref{Algo1}, which grows linearly with $k$. Detailed numerical and simulation results were provided which shed light on the effectiveness of the proposed schemes and the accuracy of the analysis.

This research may be continued in a number of ways: (1) It is interesting to see how a similarity function may be used in conjunction with the proposed algorithms and how it affects their performance; (2) It was assumed that the $k$IC queries are posed to an oracle which makes no error. As such the focus in this work remained on design of efficient query schemes. It is of interest to develop schemes for inferring reliable labels from potentially erroneous query responses from a crowd. In this setting, the transitive relations exploited in $k$IC queries boost the reliability \cite{Lahouti16}. The inference schemes presented in e.g., \cite{DawidSkene79}\cite{karger}\cite{Vinayak2014}\cite{Yuchen14}\cite{Perona2010} would be informative starting points; (3) Our analysis showed the superior query rate efficiency of the Algorithm~\ref{Algo3} proposed for 3IC. It would be interesting to explore how it may be generalized to $k$IC for $k>3$. (4) In the current work, $k$IC queries solicit (dis)similarities among samples in an abstract manner. In practice, we often deal with multi-attribute classification problems. In such settings, it may be advantageous to explicitly incorporate this within the query schemes. Some of the attributes such as color may be easily annotated by machines, where other perhaps more subjective attributes may be better handled by humans. Hybrid labeling schemes involving machines and human annotations is an interesting research avenue. The joint distribution of attributes would also become useful in this setting. The interested reader is referred to \cite{Perona2010} for one related study. (5) The focus of the current research was on design and theoretical analysis of labeling schemes that are efficient from a query rate perspective. To arrive at a complete understanding of the overall budget for labeling, for example when preparing a dataset for training in machine learning applications, one needs to also take into account the query price. In general to set the query price, multiple factors should be considered. First the application domain, the business model and practical constraints set a framework. Whether the annotators are experts for hire, employees or short-term freelancers or simply enthusiasts play a critical role. The pricing solutions may be formulated as fixed or variable (adaptive) plans. The commercial services in this space have already adopted a variety of business models and specialize in different application domains. Second, one can take a cost perspective to pricing, as handling different queries may require different effort levels. The query cost may be quantified by the time it takes an annotator to process it (labeling time), and naturally not all annotators are the same. The (cognitive) difficulty or the level of expertise required for an annotation task are also important factors in understanding the query cost. Third, from a value perspective, again not all queries are equal, which we can consider in pricing. From an information theoretic view, in general we may gain more or less information (value) from a query when responded. The application domain and the quality and speed of the annotators also affect the value we get from a query response. This clearly shows the wide ranging and potentially intriguing innovation opportunities for research and experiments in this direction.   

\section*{Acknowledgement}
The authors wish to acknowledge O. Shokrollahi and the participants who helped with the experiments reported in Section~\ref{PerfSec}.


\bibliographystyle{IEEEtran}
\vskip 0.2in
\bibliography{Query22}

\appendices

\ifCLASSOPTIONcompsoc
\IEEEraisesectionheading{\section{Proof of Lemma 1}\label{AppLemma}}
\else
\section{Proof of Lemma 1}\label{AppLemma}
\fi
\IEEEPARstart{T}{he} problem resembles those of number composition and knapsack problems, albeit over a graph. The vertices are indexed by $j\in \{1, \hdots,k\}$. Every item in the set $\{1,\hdots,k\}$ is placed in exactly one bin (knapsack or clique) and exactly once. With $k$ vertices and $i$ disjoint subgraph cliques, the number of possible graph realizations is denoted by $g(k,i)$. 
This number when we limit ourselves to instances with the largest clique of size $j$ is $h(k,i,j)$. We have
\begin{equation}\label{gFunDef2}
g(k,i)=\sum_{j=\lceil{\frac{k}{i}}\rceil}^{k-i+1} h(k,i,j)
\end{equation}
With $k$ items and $i$ bins, if we put one item in every bin except the largest one, we arrive at $k-i+1$ as the maximum number of items in the largest bin. The minimum number of items in the largest bin is $\lceil{\frac{k}{i}}\rceil$. This is when the items are distributed in the bins almost uniformly. This gives $h(k,i,j)$ in \eqref{hEQ2}, where $h(l,l,1)=1, h(l,1,l)=1, g(l,1)=1, g(l,l)=1, \forall l\ge 1$ and $h(k,i,1)=0, \forall k \neq i$.

\begin{figure*}
\begin{equation}\label{hEQ2}
h(k,i,j)=
\begin{cases}
{k \choose j} \times \sum_{l=\lceil{\frac{k-j}{i-1}}\rceil}^{\min(j,k-j-i+2)} \frac{1}{1+\rho^*(k,i-1,j,l)}\times h(k-j,i-1,l)& \lceil{\frac{k}{i}}\rceil \leq j \leq k-i+1\\
0 & \text{otherwise}
\end{cases}
\end{equation}
\end{figure*}
The number of ways we can choose $j$ items, hence forming the largest bin, in the set of $k$ items is ${k \choose j}$. Once the largest bin with $j$ items is chosen, then we have $k-j$ remaining items and $i-1$ bins. The summation in the definition of $h$, assuming $\rho^*(k,i,j,l)=0$, is equal to $g(k-j,i-1)$. The range of summation in \eqref{hEQ2} is obtained in the same manner as we did for \eqref{gFunDef2} and noting that $l\le j$.
In some occasions, we have more than one bin with the largest number of items. The factor $\frac{1}{1+\rho^*}$ is meant to count for that. With $k-j$ items and $i$ bins whose biggest one has $l$ items, $\rho(k,i,j,l)$ is the number of bins with size $j$ that we can have. We have $l\le j$, and from the definition $\rho(k,i,j,l)=0, \forall l<j$. We have
\begin{equation}\label{EqRhoOpt}
\begin{array}{ccl}
    \rho^*(k,i,j,j)&=&\max_{\rho \in \mathbb{Z}^+}\rho \\
    &&\\
    \rho & \le & \frac{k-j-i}{j-1}\\
    \rho & \le & \lfloor{\frac{k-j}{j}}\rfloor
    \end{array}
\end{equation}
where $\mathbb{Z}^+$ indicate the non-negative integers.The first condition in \eqref{EqRhoOpt} ensures we have the $i$ non-empty bins (with at least one item in each of them), i.e.,
\begin{equation}\label{FirstCondRho}
k-j-\rho j \ge i-\rho \Rightarrow \rho \le \frac{k-j-i}{j-1}.    
\end{equation}
In the remaining $k-j$ items, we can at most have $\lfloor{\frac{k-j}{j}}\rfloor$ bins with the size $j$ which leads to the second condition in \eqref{EqRhoOpt}. The solution to the optimization problem \eqref{EqRhoOpt} then gives
\begin{equation}\label{RhoSolutionApp}
    \rho^*(k,i,j,l)= 
    \begin{cases}
    \lfloor{\min\big(\frac{k-j-i}{j-1},\frac{k-j}{j}\big)}\rfloor & j=l\\
    0 & otherwise 
    \end{cases}
\end{equation}
for the non-trivial cases of interest, $i,j,k\ge 2$, $k > j \ge l$, and $i \le k$. Note that if \eqref{FirstCondRho} is to be true for any $\rho>0$ it is definitely true for $\rho=0$ and hence $k-j-i>0$, and $\rho^*\ge 0$ in \eqref{RhoSolutionApp}. 
The function $g(k,i)$ for $2\ge k\ge 10$ and $i\le k$ is computed as in Table \ref{tab:my_label}. Figure \ref{FigfkN} shows $f(k,N)$ as a function of $N$.
\begin{figure}[ht]
\includegraphics[width=\linewidth]{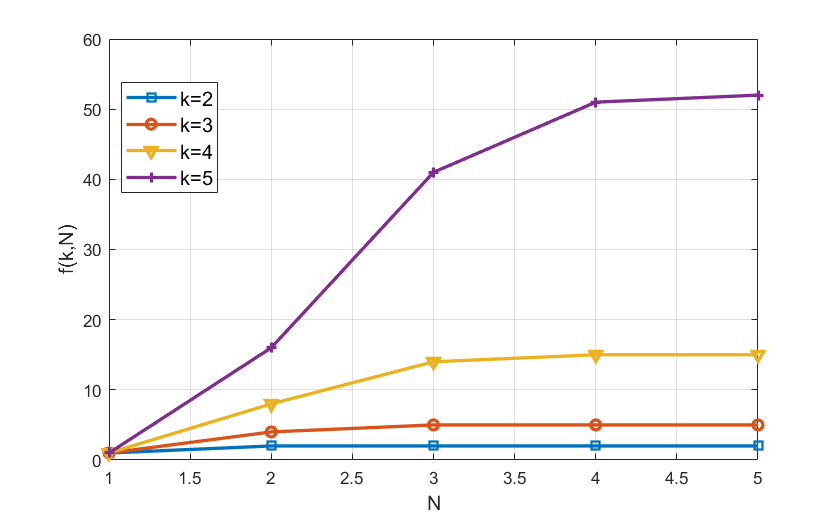}
\caption{$f(k,N)$ as a function of $N$}
\label{FigfkN}
\end{figure}
\begin{table*}[ht]
    \centering
    \begin{tabular}{|c||r|r|r|r|r|r|r|r|r|}    
    \hline
\diagbox[]{$k$}{$i$} & 2 & 3 & 4 & 5 & 6 & 7 & 8 & 9 & 10\\\hline\hline
2 & 1	& 0	& 0	& 0	& 0	& 0	& 0	& 0	& 0\\\hline
3 & 3	& 1	& 0	& 0	& 0	& 0	& 0	& 0	& 0\\\hline
4 & 7	& 6	& 1	& 0	& 0	& 0	& 0	& 0	& 0\\\hline
5 & 15	& 25 & 10	& 1	& 0	& 0	& 0	& 0	& 0\\\hline
6 & 31	& 90 & 65 & 15 & 1 & 0 & 0 & 0 & 0\\\hline
7 & 63	& 301	& 350	& 140	& 21	& 1	& 0	& 0	& 0\\\hline
8 & 127	& 966	& 1701	& 1050	& 266	& 28	& 1	& 0	& 0\\\hline
9 & 255	& 3025	& 7770	& 6951	& 2646	& 462	& 36	& 1	& 0\\\hline
10 & 511	& 9330	& 32005	& 42525	& 22827	& 5880	& 750	& 45	& 1\\\hline
\end{tabular}
\caption{$g(k,i)$: The number of possible graph realizations with $k$ vertices and $i$ disjoint subgraph cliques}
\label{tab:my_label}
\end{table*}

Here are a few examples for function $f(k,N)$:
\begin{equation}
f(k,N)=\sum_{i=1}^{\min{(N,k)}} g(k,i).    
\end{equation}
\begin{equation}
    \begin{array}{ccl}
         f(3,2)&=& g(3,1) + g(3,2)\\
         &=& h(3,1,3) + h(3,2, 2)\\
         &=& 1+{3\choose 2}=4\\
         
         f(4,2) &=& g(4,1)+g(4,2)\\
        &=& h(4,1,4) + h(4,2,2) + h(4,2,3)\\
         &=&1+{4 \choose 2} \times 0.5 \times h(2,1,2)+{4 \choose 3}=8.
         \end{array}
         \end{equation}
And here are a few examples for function $g(k,i)$:
\begin{equation}\label{gexampleEQ}
    \begin{array}{ccl}
         g(5,3)&= & h(5,3,2)+h(5,3,3)\\ 
         &=& {5 \choose 2}\times \frac{1}{1+\rho^*(5,2,2,2)}\times h(3,2,2)+ {5 \choose 3}\times \frac{1}{1+\rho^*(5,2,3,3)}\times h(2,2,1)\\
         &=&{5 \choose 2}\times 0.5 \times {3 \choose 2}+{5 \choose 3}=25\\
         
         g(6,3)&= & h(6,3,2) + h(6,3,3) + h(6,3,4)\\
         &=&{6 \choose 2}\times \frac{1}{1+\rho^*(6,2,2,2)} \times h(4, 2, 2) + {6 \choose 3}\times \frac{1}{1+\rho^*(6,2,3,2)} \times h(3, 2, 2)\\
         &+& {6 \choose 4}\times \frac{1}{1+\rho^*(6,2,4,1)} \times h(2, 2, 1)\\
         &=&{6 \choose 2}\times \frac{1}{3} \times h(4, 2, 2) + {6 \choose 3}\times 1 \times h(3, 2, 2) + {6 \choose 4}\times 1 \times h(2, 2, 1)\\
         &=&5\times 0.5 \times {4 \choose 2}+20 \times {3 \choose 2}+15\\
         &=&15+60+15=90\\
         \end{array}
\end{equation}

\section{Proofs of Section~3}
\label{Algo1App}


In Algorithm~1 with $k$IC, a given sample is queried with $k$ classes at a time. The probability that the sample finds a match with one of the classes in a query is then $\pi'=\sum_{j \in \mathcal{J}}\pi_j$. Here $\pi_j$ is the probability of the class $j$ and $\mathcal{J}$ is the class representatives participating in the current $k$IC query with the current sample. Taking into account that in Algorithm~1 the set of classes participating in a query with a given sample is chosen in sequence, once and if the sample does not settle in preceding queries, we can write the probability $P(E_q)$ of sample $s$ settling in query $q$  as follows:
\begin{equation}\label{Eq40}
\begin{array}{ccl}
P(E_q) &=& P(s \notin \mathcal{L}_1,\hdots, s \notin \mathcal{L}_{q-1},s \in \mathcal{L}_q)\\
&=&P(s \notin \mathcal{L}_1,\hdots, s \notin \mathcal{L}_{q-1})\\
&\times& P(s \in \mathcal{L}_{q} | s \notin \mathcal{L}_1,\hdots, s \notin \mathcal{L}_{q-1})\\
&=& (1-\sum_{i=1}^{q-1} \pi'_i) \times \frac{\pi'_q}{\sum_{i=q}^{\frac{N}{k-1}} \pi'_i}\\
&=&\pi'_q
\end{array}
\end{equation}
In \eqref{Eq40}, without loss of generality we assume $k-1|N$, and $\pi'_i, 1\leq i \leq \frac{N}{k-1}$ represents the sum of the probabilities of the $k-1$ classes that participate in each of the $k$IC queries. 
The expected number of queries it takes for the sample to settle is then given by
\begin{equation}\label{Eq41}
R_1^{kIC}=\sum_{i=1}^{\frac{N}{k-1}}i \pi'_{i} \hspace{1cm} \text{queries/sample}
\end{equation}
where $2\leq k <N$. Consider two specific ordering of the class representatives for query in Algorithm~1; the first one is ${\boldsymbol{\pi}^1}=\{\pi_1, \hdots, \pi_{j-1}, \pi_j,\pi_{j+1}, \hdots, \pi_{l-1},\pi_l,\pi_{l+1}, \hdots, \pi_N\}$ and the second one is ${\boldsymbol{\pi}^2}=\{\pi_1, \hdots, \pi_{j-1}, \pi_l,\pi_{j+1}, \hdots, \pi_{l-1},\pi_j,\pi_{l+1}, \hdots, \pi_N\}$. As evident, the difference between the two is that the positions of $\pi_j$ and $\pi_l$ have been switched. Now, as evident in \eqref{Eq41}, this switching only affects the rate if $j$ and $l$ are originally positioned in two different $k$IC queries and $\pi_j\neq \pi_l$. In this case, if based on $\boldsymbol{\pi}^1$, $\pi_j$ and $\pi_l$ appear in $i$th and $i'$th queries, respectively and $\pi_l<\pi_j, i<i'$, we have
\begin{equation}
\begin{array}{ccl}
R_1^{kIC}(\boldsymbol{\pi}^1)-R_1^{kIC}(\boldsymbol{\pi}^2)&=&i\times (\pi_j-\pi_l) + i' \times (\pi_l - \pi_j)\\
  &=   & (i-i')\times(\pi_j-\pi_l) < 0
\end{array}
\end{equation}
This shows that if we order the class representatives in Algorithm~1 in reducing order of their class sizes, we achieve the smallest query rate. Similarly, one sees that if we order the class representatives in Algorithm~1 in increasing order of their class sizes, it leads to the largest query rate.

In the case of uniform class distribution, we have $\pi'_i=(k-1)/N, 1\leq i \leq N/(k-1)$ and hence we have 
\begin{equation}
R_1^{kIC}(\boldsymbol{\pi}^u)= \sum_{i=1}^\frac{N}{k-1}i\,(k-1)/N
\end{equation}
which gives the result in Corollary~1.

To obtain the results in Corollary~2 with $k$IC and assuming $k-1|N$, we consider the nonuniform distribution, $\boldsymbol{\pi}^n(x)$, presented in Section~2 of the paper. We examine two scenarios, when the class with distinct size is positioned either as the \emph{first} one or as the very \emph{last} one for query in Algorithm~1. In the former scenario, for average query rate, we have
\begin{equation}
    R^{kIC}_{1,f}= \frac{\alpha}{N}+(k-2)\frac{\alpha'}{N}+\sum_{i=2}^{\frac{N}{k-1}}i(k-1)\,\frac{\alpha'}{N}
\end{equation}
adding and subtracting a $(k-1)\frac{\alpha'}{N}$ term and taking into account the description of $\boldsymbol{\pi}^n$ in Section~2 of the paper, we obtain
\begin{equation}
\begin{array}{cl}
    R^{kIC}_{1,f}(\boldsymbol{\pi}^n(x))=&\frac{N(N+k-1)}{2(N-1)(1+x)(k-1)}\\
    &+\frac{x}{1+x}-\frac{1}{(1+x)(N-1)}
\label{xFirstEq}
\end{array}
\end{equation}
and in the latter scenario, for average query rate, we have 
\begin{equation}
    R^{kIC}_{1,l}(\boldsymbol{\pi}^n(x))= \sum_{i=1}^{\frac{N}{k-1}-1}i(k-1)\frac{\alpha'}{N}+\Big(\frac{N}{k-1}\Big)\Big(\frac{\alpha}{N}+(k-2)\frac{\alpha'}{N}\Big)
\end{equation}
and following some mathematical manipulations, we have
\begin{equation}
    R^{kIC}_{1,l}(\boldsymbol{\pi}^n(x))= \frac{N}{(k-1)(1+x)}\Big(x+\frac{N+k-3}{2(N-1)}\Big).
\label{xLastEq}
\end{equation}
With \eqref{xFirstEq} and \eqref{xLastEq}, we can now examine the two cases when the class with the distinct size is either very small or very large as described in Section~2 of the paper. For $\epsilon\ll 1$ and large $N$, and $N\epsilon \ll 1$, if we set $x=\epsilon$, in $R_{1, f}$ and $R_{1, l}$, we obtain
\begin{equation}
\frac{N(1+\epsilon)+k-3}{2(k-1)}\lesssim R_1^{kIC}\left(\boldsymbol{\pi}^n(\epsilon)\right)\lesssim \frac{N+k-1-N\epsilon}{2(k-1)} 
\end{equation}
where $\lesssim$ means $\leq$ up to $O(N\epsilon)$. If instead we set $x=1/\epsilon$ in the said equations, for $\epsilon\ll 1$ and large $N$, we obtain
\begin{equation}
1+\frac{N\epsilon}{2(k-1)}\lesssim R_1^{kIC}\left(\boldsymbol{\pi}^n\left(1/\epsilon\right)\right) \lesssim \frac{N(2-\epsilon)}{2(k-1)}. 
\end{equation}
This completes the proof.

\section{Proofs of Section~4}
\label{Algo2App}
In this Section, we present the proof of Theorem~2. For simpler presentation, we take a brief lapse in the notations within this proof that is clear from the text.
We consider a single round of Algorithm~2 with one large batch of data with $L_1$ samples and class probabilities $\boldsymbol{\pi}$ presented in Section~2 of the paper. In Algorithm~2, the queries are formed with samples drawn randomly from the batch. Consider the samples participating in a $k$IC query as the vector $\mathbf{X}=(X_1,\hdots,X_k)$. Following this query, $0,1,\hdots,k-2$ or $k-1$ samples may be settled and as a result, $k,k-1,\hdots,2$ or $1$ sample(s) are returned to the batch for the next round. We denote $Y\in \{1, \hdots, N\}\cup \{\ell\}$ as the random variable showing the outcome of the query. Indeed, $Y=\ell$ indicates the case where a sample is settled, i.e., $P(Y=\ell) \equiv P(s\in{\cal L})$. We have
\begin{equation}\label{PYEq}
P(Y)=\sum_{\mathbf{X}}P(Y|\mathbf{X})P(\mathbf{X}).    
\end{equation}
With $P(X=i)=\pi_i, {\bf I}=(i,\hdots,i)_{1\times k}, 1 \leq i \leq N$, and $d_H(.,.)$ denoting the Hamming distance of the two vectors in the argument, we have
\begin{equation}\label{EqPY}
\begin{array}{ll}
P(Y|X_1,\hdots,X_k)=&\\
\begin{cases}
0 & Y \neq X_j \\
\frac{1}{k} & Y=X_j, \hspace{0.5cm} j\in\{1,\hdots,k\},\\
\frac{[d_i-1]_+}{k} & Y=\ell 
\end{cases}
\end{array}
\end{equation}
where $d_i :=k-d_H (\mathbf{X},\mathbf{I}))$, and $[x]_+=0$ if $x<0$ and $[x]_+=x$ if $x\ge0$. As a result, 
\begin{equation}\label{EqPYi}
\begin{array}{lcl}
P(Y=i)&=&\frac{1}{k}\sum_{j=1}^k {\genfrac(){0pt}{0}{k}{j}} \pi_i^j (1-\pi_i )^{k-j}\\
&=&\frac{1}{k} \big(1-(1-\pi_i )^k \big)
\end{array}
\end{equation}
and since $P(Y=\ell)=1-\sum_{j=1}^{N}P(Y=j)$, we have
\begin{equation}\label{EqSettle}
P(Y=\ell)=1-\frac{1}{k} \sum_{j=1}^{N}\big(1-(1-\pi_j)^k \big). 
\end{equation}
Note that computing $P(Y=\ell)$ directly from \eqref{EqPY} is complicated, however, since we are able to obtain $P(Y=i)$ from \eqref{EqPYi}, we can compute $P(Y=\ell)$ as presented above in \eqref{EqSettle}.
The probability of a sample being in class $i\in\{1,\hdots, N\}$ is now obtained by normalizing $P(Y=i)$:
\begin{equation}\label{EqProb}
\pi'_i=\frac{P(Y=i)}{1-P(Y=\ell)}
\end{equation}
which gives 
\begin{equation}
\pi'_i=\frac{1-(1-\pi_i)^k}{N-\sum_{j=1}^{N} (1-\pi_j)^k}, \hspace{1cm} 1\le i \le N,
\label{EqPiEvolutionA}
\end{equation}
and completes the first part of proof. Note that throughout the proof and starting from \eqref{PYEq}, we utilize the fact that the dataset is large and as such the probability of samples in a random $k$IC query follows those from the dataset class distribution. For this single batch of large size $L_1$, $\frac{L_1}{k}$ queries are made, $L_1 P(Y=\ell)$ samples are settled, which leads to an average query rate of $R^{kIC}_{2,1} = \frac{1}{kP(Y=\ell)}$ (queries/sample) and a reduced batch size of $L_2=L_1(1-P(Y=\ell))$. 

Equation \eqref{EqPiEvolutionA} enables us to track the class probabilities over $m\ge 1$ consecutive rounds of Algorithm~2, as long as the law of large number remains in effect, i.e., $L_{m+1}\gg 1$. Also, in round $1 \le r \le m$, this allows us to compute the probability that a sample is settled, $P(Y_r=\ell)\equiv P(s\in {\cal L}_r)$, the batch size $L_r$, and the average query rate in the round, ${R}^{kIC}_{2,r}$ and over all $m$ rounds, $R^{kIC}_{2,1:m}$. We have
\begin{equation}
L_r=L \Pi_{i=1}^{r-1}\big(1-P(Y_{i}=\ell)\big)    
\end{equation}
\begin{equation}\label{EqSingleRoundRate}
{R}^{kIC}_{2,r}=\frac{1}{kP(Y_r=\ell)}     
\end{equation}
\begin{equation}\label{EqMultiRoundRate}
\begin{array}{ccl}
R^{kIC}_{2, 1:m} &=& \frac{1}{k}\Big(L+L\big(1-P(Y_1=\ell)\big)+\hdots\\
&&+L\Pi_{i=1}^{m-1}\big(1-P(Y_{i}=\ell)\big)\Big) \times\\
&&\Big(LP(Y_1=\ell)+L\big(1-P(Y_1=\ell)\big)P(Y_2=\ell)+\\
&\hdots&+L\Pi_{i=1}^{m-1}\big(1-P(Y_{i}=\ell)\big)  P(Y_m=\ell)\Big)^{-1}\\
&=&\frac{1}{k}{\sum_{r=1}^m \Pi_{i=1}^{r-1}\big(1-P(Y_{i}=\ell)\big)}\times\\
&&\Big({\sum_{r=1}^m \Pi_{i=1}^{r-1}\big(1-P(Y_{i}=\ell)\big)  P(Y_r=\ell)}\Big)^{-1}
\end{array}
\end{equation}
This completes the proof of Theorem~2. 

To derive the results presented in Corollary~3, we specialize the class distribution to uniform $\boldsymbol{\pi}=\boldsymbol{\pi}^u$. Replacing $\pi_i=\frac{1}{N}$ in \eqref{EqPiEvolutionA}, it is straight forward to see that indeed $\pi'_i=\frac{1}{N}$ and $\boldsymbol{\pi}'=\boldsymbol{\pi}^u$. In this case, using \eqref{EqSettle}, we have
\begin{equation}
P^u(Y=\ell)=1-\frac{N}{k}\big(1-(1-\frac{1}{N})^k \big),     
\end{equation}
and when $N\gg k>1$, we obtain
\begin{equation}
\begin{array}{lcl}
P^u(Y=\ell)&\simeq& 1-\frac{N}{k}\bigg(1-\big(1-\frac{k}{N}+\frac{k(k-1)}{2N^2}\big)\bigg)\\
&=&\frac{k-1}{2N}.    
\end{array}
\end{equation}
Since the class distribution is uniform and it remains as such through multiple rounds of Algorithm~2, the average query rate also remains the same. Using \eqref{EqSingleRoundRate} and \eqref{EqMultiRoundRate}, we have 
\begin{equation}
R^{kIC}_{2,1:m}(\boldsymbol{\pi}^u)={R}_r(\boldsymbol{\pi}^u)=\frac{1}{kP^u(Y=\ell)}\simeq \frac{2N}{k(k-1)}   
\end{equation}
for $1\le r \le m$.

To obtain the results in Corollary~4, we specialize the class probability distribution $\boldsymbol{\pi}$ to $\boldsymbol{\pi}^n(\frac{1}{\epsilon})$. In this case, we have ${\pi}_i=\frac{\epsilon}{(N-1)(1+\epsilon)}\approx \frac{\epsilon}{N-1}, 1\le i \le N-1$ and $\pi_N=\frac{1}{1+\epsilon} \approx 1-\epsilon$. For a large batch of samples from this class distribution, using \eqref{EqSettle}, we have 
\begin{equation}
\begin{array}{ccl}
P(Y=\ell) &=& 1-\frac{1}{k} \sum_{i=1}^N(1-(1-\pi_i)^k)\\
&\approx& 1-\frac{1}{k}\bigg((N-1)\big(1-(1-\frac{k\epsilon}{N-1})\big)+1\bigg)\\
&=&1-\frac{1}{k}(1+k\epsilon)=\frac{k-1}{k}-\epsilon
\end{array}
\end{equation}
And from \eqref{EqProb}, we obtain
\begin{equation}
\begin{array}{ccl}
P(Y'=N) &=& \frac{1-\epsilon^k}{N-\epsilon^k-(N-1)(1-\frac{\epsilon}{N-1})^k}\\ &\approx&\frac{1-\epsilon^k}{N-\epsilon^k-(N-1-k\epsilon)}\\
&\approx&\frac{1}{1+k\epsilon} \approx 1-k\epsilon
\end{array}    
\end{equation}
and
\begin{equation}
P(Y'=i) \approx \frac{k\epsilon}{N-1} \hspace{1cm}         i\in\{1,\hdots,N-1\}.    
\end{equation}
Using \eqref{EqSingleRoundRate}, the average query rate in a single round of Algorithm~2 with class probability $\boldsymbol{\pi}^n(\frac{1}{\epsilon})$ is then given by
\begin{equation}
\begin{array}{ccl}
R^{kIC}_2\big(\boldsymbol{\pi}^n(\frac{1}{\epsilon})\big) &=&\frac{1}{kP(Y=\ell)}=\frac{1}{k-1-k\epsilon}\\
 &\approx& \frac{1}{k-1}(1+\frac{k}{k-1}\epsilon)=\frac{1}{k-1}+\epsilon'      
\end{array}
\end{equation}
where $\epsilon':=\frac{k\epsilon}{(k-1)^2}$.



\section{Proofs of Section~5}
\subsection{Computing $\Pi$}
In case $0<i<N-2$ and $j<i$, we have the set of transitions in \eqref{Set1EQ};
\begin{equation}\label{Set1EQ}
\begin{array}{lcl}
P(S_{i+1,0}|S_{i,j}) &=& (p_c+p_d+p_e)\times P(\ell(s_t \in \mathcal{T}=0) + p_a \times P(\ell(s_t \in \mathcal{T})=i+1) \\
P(S_{l,0}|S_{i,j}) &=& p_a \times P(\ell(s_t \in \mathcal{T})=l), \quad 0\leq l \leq N-2, l \ne i+1;\\
P(S_{i+1,l}|S_{i,j}) &=& (p_c+p_d+p_e) \times P(\ell(s_t \in \mathcal{T})=l), \quad 0< l \leq i+1, l \ne j+1;\\
P(S_{l,i+1}|S_{i,j}) &=& (p_c+p_d+p_e) \times P(\ell(s_t \in \mathcal{T})=l), \quad i+1\leq l \leq N-2;\\
P(S_{i+1,j+1}|S_{i,j})&=&(p_c+p_d+p_e) \times P(\ell(s_t \in \mathcal{T})=j+1)+p_b \times P(\ell(s)_t \in \mathcal{T})=i+1)\\
P(S_{j+1,l}|S_{i,j})&=&p_b \times P(\ell(s _t \in \mathcal{T})=l), \quad 0< l \leq j+1;\\
P(S_{l,j+1}|S_{i,j})&=&p_b \times P(\ell(s _t \in \mathcal{T})=l), \quad j+1\leq l \leq N-2, l \ne i+1;\\
P(S_{j+1,0}|S_{i,j})&=&p_b \times P(\ell(s _t \in \mathcal{T})=0)+p_a \times P(\ell(s _t \in \mathcal{T})=j+1)
\end{array}    
\end{equation}
In case we start from state $(i,i)$, we have the set of transitions in \eqref{Set2EQ};
\begin{equation}\label{Set2EQ}
\begin{array}{lcl}
P(S_{i+1,0}|S_{i,i}) &=& (p_b(i,i)+p_c(i,i)+p_d(i,i)+p_e(i,i)) \times P(\ell(s_t \in \mathcal{T})=0)\\
&+& p_a(i,i) \times P(\ell(s_t \in \mathcal{T})=i+1)\\
P(S_{l,0}|S_{i,i}) &=& p_a (i,i) \times P(\ell(s_t \in \mathcal{T})=l), \quad 0\leq l \leq N-2, l \ne i+1;\\
P(S_{i+1,l}|S_{i,i}) &=& (p_b (i,i)+p_c (i,i)+p_d (i,i)+p_e (i,i)) \times P(\ell(s_t \in \mathcal{T})=l), \quad 0< l \leq i+1;\\
P(S_{l,i+1}|S_{i,i}) &=& (p_b (i,i)+p_c (i,i)+p_d (i,i)\\
&+&p_e (i,i)) \times P(\ell(s_t \in \mathcal{T})=l), \quad i+1< l \leq N-2;
\end{array}    
\end{equation}
In case $i=N-2, 0\leq j < N-2$, we have the set in \eqref{Set3EQ};
\begin{equation}
\begin{array}{lcl}\label{Set3EQ}
P(S_{l,0}|S_{N-2,j}) &=& p_f (j) \times P(\ell(s_t \in \mathcal{T})=l), \quad 0\leq l \leq N-2, l \ne j+1;\\
P(S_{j+1,0}|S_{N-2,j}) &=& p_f (j) \times P(\ell(s_t \in \mathcal{T})=0)+p_g (j) \times P(\ell(s_t \in \mathcal{T})=j+1)\\
P(S_{j+1,l}|S_{N-2,j}) &=& p_g (j) \times P(\ell(s_t \in \mathcal{T})=l), \quad 0< l \leq j+1;\\
P(S_{l,j+1}|S_{N-2,j}) &=& p_g (j) \times P(\ell(s_t \in \mathcal{T})=l), \quad j+1< l \leq N-2;\\
\end{array}    
\end{equation}
And finally,
\begin{equation}
\begin{array}{ll}
P(S_{l,0}|S_{N-2,N-2})=&\\ 
P(\ell(s_t \in \mathcal{T})=l), &\quad 0\leq l \leq N-2.
\end{array}
\end{equation}

As evident, the above transition probabilities depend on the probability of a sample with a given length being in the temporary bin. We compute this next in \eqref{lengthEQ1} and \eqref{lengthEQ2}. 
\begin{equation}\label{lengthEQ1}
\begin{array}{ccl}
P(\ell(s \in \mathcal{T})=l+1) &=& \sum_{m=0}^{N-2} P(s \in \mathcal{T}|E_{m,l})P((s,s') \in {\cal Q},\ell(s)=l,\ell(s')=m)\\ 
&=&\sum_{m=l}^{N-2} p_e(m,l) P(S_{m,l}); \hspace{1cm} 0\leq l \leq N-2
\end{array}
\end{equation}
\begin{equation}\label{lengthEQ2}
\begin{array}{ccl}
P(\ell(s \in \mathcal{T})=0) &=& \sum_{l=0}^{N-2} \sum_{m=0}^{N-2} P(s \notin \mathcal{T}|E_{l,m}) P((s,s') \in {\cal Q},\ell(s)=l,\ell(s')=m)\\ 
&=&\sum_{l=0}^{N-2}   \sum_{m=l}^{N-2} (1-p_e(m,l)) P(S_{m,l})
\end{array}
\end{equation}
One sees in the above formulation, the transition matrix $\Pi$ also depends on the unknown probabilities of states. So, one may invoke an iterative solution to this system of equations, starting from an initial value for $P(S_{i,j})$.

\subsection{Proof of Lemma~2}
The proof immediately follows noting how many samples are settled in any of the above five cases. Specifically, two samples are settled in case a, and one sample is settled in cases b, c, and d. Therefore, we have $SL=2p_a(i,j)+p_b(i,j)+p_c(i,j)+p_d(i,j)$ and then the proof follows using the probabilities of the cases obtained above. Note that in case d, two samples match (they are from the same class). As such, we would need to query only one of them (as representative) subsequently and we consider the other one as settled. 

\subsection{Proof of Theorem~3}
For a dataset with equal class sizes, the probability that the sample $s_1$ settles in the first query $q=1$ is given by
\begin{equation}
\begin{array}{ccl}
P(Q=1) &=& P(s_1 \in \mathcal{L}_1)\\
&=&\sum_{\ell(s_2)=i=0}^{N-2}
P(s_1\in \mathcal{L}_1|(s_1,s_2) \in \mathcal{Q},\\
&&\ell(s_1)=0,\ell(s_2)=i)\\
&\times& P((s_1,s_2)\in \mathcal{Q},\ell(s_1)=0, \ell(s_2)=i)\\
&=&\sum_{\ell(s_2)=i=0}^{N-2}P(s_1\in \mathcal{L}_1|{E}_{i0}=1)\\ 
&\times& P({E}_{i0}=1)\\
&=&\sum_{\ell(s_2)=i=0}^{N-2}P(S_{i,0}) \times P(s_1 \in \mathcal{L}_1|{E}_{i0}=1);
\end{array}    
\end{equation}
Similarly, for $q=2$, we have
\begin{equation}
\begin{array}{ccl}
P(Q=2) &=& P(s_1 \notin \mathcal{L}_1,s_1 \in \mathcal{L}_2)\\
&=&P(s_1 \notin \mathcal{L}_1) \times P(s_1 \in \mathcal{L}_2 | s_1 \notin \mathcal{L}_1)\\
&=& (1-P(s_1 \in \mathcal{L}_1))\\
&\times& \sum_{l(s_2)=i=0}^{N-2} P(S_{i,1}) \times P(s_1 \in \mathcal{L}_2|{E}_{i1}=1);
\end{array}
\end{equation}
And for $q=3$, we have
\begin{equation}
\begin{array}{ccl}
P(Q=3) &=& P(s_1 \notin \mathcal{L}_1,s_1 \notin \mathcal{L}_2,s_1 \in \mathcal{L}_3)\\
&=&P(s_1 \notin \mathcal{L}_1, s_1 \notin \mathcal{L}_2)\\ 
&\times& P(s_1 \in \mathcal{L}_3 | s_1 \notin \mathcal{L}_1, s_1 \notin \mathcal{L}_2)\\
&=&P(s_1 \notin \mathcal{L}_1) \times P(s_1 \notin \mathcal{L}_2|s_1 \notin \mathcal{L}_1)\\
&\times& P(s_1 \in \mathcal{L}_3 | s_1 \notin \mathcal{L}_1, s_1 \notin \mathcal{L}_2)\\
&=& (1-P(s_1 \in \mathcal{L}_1))\\
&\times& (1-P(s_1 \in \mathcal{L}_2|s_1 \notin \mathcal{L}_1))\\ 
&\times& P(s_1 \in \mathcal{L}_3 | s_1 \notin \mathcal{L}_1, s_1 \notin \mathcal{L}_2)\\
&=& (1-P(s_1 \in \mathcal{L}_1))\\
&\times& (1-P(s_1 \in \mathcal{L}_2|s_1 \notin \mathcal{L}_1)) \\
&\times& \sum_{l(s_2)=i=0}^{N-2} P(S_{i,2})
\times P(s_1 \in \mathcal{L}_3|{E}_{i2}=1).
\end{array}
\end{equation}
We can continue the same steps to obtain $P(Q=q)$ as in (25). The probability $P(s\in \mathcal{L}_{j+1}|{E}_{i,j}=1)$ indicates the probability that a sample $s$ with length $\ell(s)=j$ settles in the next query, when it is paired with a sample of length $i$. The probabilities in (27) is obtained by following the events described in Section~5.1 and noting the edge effect when a sample reaches a length of $N-2$ in Algorithm~3. Such a sample definitely settles in the next query once it is clear how it compares with one of the remaining two classes. 

\end{document}